\documentclass[hidelinks,onefignum,onetabnum]{siamart251216}
\usepackage[applemac]{inputenc}


\usepackage{graphicx,epsfig,color}
\usepackage{amsmath,mathrsfs} 
\usepackage{amssymb} 
\usepackage{amsfonts}
\usepackage{mathtools}

\usepackage{mathtools} 
\usepackage{epstopdf}
\usepackage{ifthen}
\usepackage{bbm}
\graphicspath{{eps/}}
\usepackage{bookmark}
\usepackage{hyperref}
\usepackage{comment}
\usepackage{xfrac}
\excludecomment{extra}
\excludecomment{k}
\includecomment{krad}

\usepackage{algorithm}
\usepackage{algpseudocode}



\newcommand{\inR}{\in \mathbb{R}}

\newcommand{\C}{ \mathbb{C}}
\newcommand{\R}{ \mathbb{R}}

\newcommand{\N}{ \mathbb{N}}


\def\V#1{{\boldsymbol{#1}}}         
\def\Spc#1{{\mathcal{#1}}}  
\def\M#1{{\bf{#1}}}  

 %


\renewcommand{\[}{\begin{equation}}
\renewcommand{\]}[1]{\label{eq:#1}\end{equation}}

\providecommand{\revise}[1]{#1}
\newcommand{\Tr}{\mathsf{T}}


\title{
Universal Architectures for the Learning of Polyhedral Norms and Convex Regularizers\thanks{The research leading to these results has received funding from the European Research Council under Grant ERC-2020-AdG 
FunLearn-101020573.
}
}

\author{
Michael Unser and Stanislas Ducotterd
\thanks{Biomedical Imaging Group, \'Ecole polytechnique f\'ed\'erale de Lausanne (EPFL),
Station 17, CH-1015, Lausanne, Switzerland ({\tt michael.unser@epfl.ch}). }
 }

\begin{document}

\maketitle




%
\begin{abstract}
This paper addresses the task of learning convex regularizers to guide the reconstruction of images from limited data. By imposing that the reconstruction be amplitude-equivariant, we narrow down the class of admissible functionals to those that can be expressed as a power of a seminorm. We then show that such functionals can be approximated to arbitrary precision with the help of polyhedral norms. In particular, we identify two dual parameterizations of such systems: (i) a synthesis (or atomic) form with an $\ell_1$-penalty that involves some learnable dictionary; and (ii) an analysis form with an $\ell_\infty$-penalty that involves a trainable regularization operator.
After having provided geometric insights and proved that the two forms are universal, we propose an implementation that relies on a specific architecture (tight frame with a weighted $\ell_1$ penalty) that is easy to train. We illustrate its use for denoising and the reconstruction of biomedical images.
We find that the proposed framework outperforms the sparsity-based methods of compressed sensing,  while it offers essentially the same convergence and robustness guarantees.
\end{abstract}
\begin{keywords}
convex optimization, regularization, learning, image reconstruction, sparsity, dictionaries
\end{keywords}

\begin{MSCcodes}
47A52, 
65K10, 
46N10, 
65J20  
\end{MSCcodes}


%

\section{Introduction}	
The context of this work is computational imaging where the goal is to reconstruct an image
from indirect and possibly incomplete measurements. We focus on scenarios where the measurements are linear functionals
of the object under investigation, such as line integrals in computer tomography \cite{Natterer1984} or samples of its Fourier transform in magnetic resonance imaging \cite{Liang2000principles}.
These quantities are corrupted by noise according to the relation $\M y=\M H \M s + \M n_{\rm noise} \in \R^M$, where
$\M s \inR^d$ is the unknown signal to be recovered and $\M H \in \R^{M \times d}$ is the system matrix that results from the discretization of the physics. Given a candidate signal $\tilde{\M s}$, the forward model is simulated as  $\M H \tilde{\M s}$ which,  for consistency,  ought to match  $\M y$ within the noise level.

Computational imaging 
is currently dominated by two paradigms \cite{McCann2019,Ravishankar2019image}. 
The first one is the variational approach, where the reconstruction task is formulated as a cost minimization problem, as in \eqref{Eq:VarRecontruct}.
Variational techniques have driven the development of image reconstruction algorithms over the past three decades. They are supported by an extensive theory and benefit from the powerful computational tools of convex optimization \cite{Ekeland1999,Boyd2004convex} and compressed sensing \cite{Elad2010b,Chandrasekaran2012,Foucart2013}. Solutions are typically computed iteratively through the minimization of a cost, either by steepest descent or by proximal-gradient techniques \cite{Figueiredo2003,Combettes2005,Figueiredo2007,Beck2009}.
Under standard hypotheses (convexity and coercivity of the cost functional), this framework provides guarantees for convergence \cite{Ekeland1999}, stability \cite{delAguilaPla2023}, and signal recovery from limited measurements \cite{Donoho2006,Candes2007}. 

The second paradigm, at the forefront of research, is the data-driven approach, often referred to as artificial intelligence (AI), where the traditional reconstruction pipeline is replaced or complemented by deep neural networks (DNN)
\cite{Jin2017,Mccann2017Convolutional,Wang2020,Lin2021artificial}. Formally, the resulting reconstruction algorithm is a nonlinear map $\M f_{\V \theta}: \R^M \to \R^d$ with parameters $\V \theta$ (the weights of the neural net). These
are pre-trained for best performance in a regression task such that $\M f_{\V \theta}(\M y_k)\approx \M s_k$ on a representative 
set of data $(\M s_k,\M y_k)_{k=1}^K$, where $K$ (the number of training images) is assumed to be sufficiently large.
DNN-based methods generally achieve superior image reconstructions \cite{Ye2023deep} but suffer from a lack of robustness \cite{Antun2020} and theoretical understanding.
More importantly, they have been found to remove or hallucinate structures \cite{Nataraj2020, Muckley2021MRIChallenge}, which is unacceptable in medical imaging.


Hybrid approaches aim to combine the strengths of both paradigms \cite{Mukherjee2023} and can be broadly
classified in two categories.

{\bf (i) Trainable regularizers}: The guiding principle here is to select a parametric regularization functional 
$\M s \mapsto g_{\V \theta}(\M s)$ and to then optimize its parameters $\V \theta$ (training) for best performance within the variational setting \cite{Hammernik2018,AgHeMaJaco2019}.
An early example is the field of expert (FoE) model \cite{Roth2009}, where $g(\M s)=\sum_{k} \phi_k(\M w_k^\Tr\M s)=\langle \M 1, \V \Phi(\M W \M s)\rangle$ is a sum of ridges. These ridges result from the application of a convex function $\phi_k: \R \to \R^+$ (potential) to some filtered versions of the signal, with the filtering templates $\M w_k$ (collectively encoded in $\M W$) being learned. Enhanced versions of this model allow for trainable potentials under convexity constraints \cite{Chen2017,Nguyen2018,Goujon2024Weakly}. 
A notable alternative  involves using an input-convex neural network \cite{Amos2017} to model the regularizer, as suggested in \cite{Mukherjee2020learned}.

{\bf (ii) Deep unrolling}: The methods in this category typically originate from an existing variational reconstruction algorithm that is reconfigured it into a trainable architecture (unrolling)  \cite{Monga2021}.
This is exemplified in the plug-and-play (PnP) framework \cite{Venkatakrishnan2013plug,Kamilov2023plug}, where deep-learning-based denoisers replace traditional proximal operators \cite{Chan2016plug,Ryu2019plug, Sun2021}. For these methods to remain variational, the learned denoiser must satisfy both firm non-expansiveness (for convergence \cite[Proposition 15]{Hertrich2021}) and monotonicity \cite{Bauschke2012firmly,Gribonval2020}, the latter being challenging to enforce.

This raises a fundamental question: To what extent can one improve variational methods by learning the regularizer, while maintaining their theoretical guarantees? While the FoE model substantially improves upon  ``handcrafted'' regularizers such as total variation (TV) \cite{Goujon2023}, its expressiveness may still be insufficient to capture the full range of convex regularizations. We contend that further improvements are possible within the variational framework under adequate  constraints to avoid hallucinations.

To address these issues, we introduce a general parametric framework for image reconstruction under the constraints of amplitude equivariance (AE) and convexity.
This parameterization involves polyhedral norms and results in a formulation that is reminiscent of compressed-sensing
techniques with a trainable dictionary \cite{Rubinstein2010,Elad2010b}. Our present contribution is two fold: \begin{enumerate}
\item The identification of two dual forms of polyhedral regularization with proof of their universality, in the sense that they are able to encode any AE convex regularizer to an $\epsilon$-level of precision.
\item The transcription of the theory into practical reconstruction architectures, including a constrained version that lends itself well to training.
\end{enumerate}

The paper is organized as follows. In Section \ref{Sec:Formulation}, we formalize the variational reconstruction problem and invoke higher-level principles (amplitude-equivariance and coercivity) to narrow down the class of admissible regularization functionals to some power of a norm, as dictated by Theorem \ref{Prop:seminorms}. In Section \ref{Sec:UnivConvexRegs}, we prove that we can 
achieve universality with the help of polyhedral norms (Theorem \ref{Theo:UniversalApprox}), the latter admitting two dual parameterizations in terms of some linear operator, as stated in Theorem \ref{Theo:ConstructPoly}.
The theoretical foundation for our results is presented in Section \ref{Sec:Theory} with a complete characterization of the geometry of polyhedral Banach spaces (Theorem \ref{Theo:PolyBanach}) in connection with atomic norms \cite{Chandrasekaran2012}. In Section \ref{Sec:CompuPipelines}, we use our results to specify explicit computational architectures, including one that relies on weighted $\ell_1$-minimization and Parseval filterbanks, which is easy to train. We then demonstrate that this approach provides competitive results for denoising and image reconstruction.

%
%
%
%
%
%

\section{Problem Formulation}
\label{Sec:Formulation}

\subsection{Variational Setup for the Resolution of Inverse Problems}
Given the data $\M y \in \R^M$ and a linear measurement model, the generic variational formulation of our signal-reconstruction problem is
\begin{align}
\label{Eq:VarRecontruct}
\min_{\M s \in \R^d} \left(\tfrac{1}{2}\|{\bf y}-{\bf H s}\|_2^2 + \lambda\, g(\M s)\right),
\end{align}
where $\M H \in \R^{M \times d}$ is a known matrix that models the physics of the acquisition, and where $g: \R^d \to \R^+$ is a regularization functional that 
promotes ``regular'' solutions. The relative strength of the regularization is modulated by the regularization factor $\lambda \in \R^+$, according to the standard practice in the field.
In this work, we aim at investigating schemes where $g=g_{\V \theta}$ is a learned regularizer
with trainable parameters collected in $\V \theta$. 

Since our goal is to identify useful classes of regularizers, we can focus on the noise-free scenario (the limit case of \eqref{Eq:VarRecontruct} as $\lambda \to
0$), without loss of generality. The solution of the corresponding generalized interpolation problem reads
\begin{align}
\label{Eq:Geninterpol}
\M f_{\V \theta}(\M y)=\arg \min_{\M s \in \R^d}\left\{ g_{\V \theta}(\M s) \  \mbox{ s.t. } \  \M y= \M H \M s\right\},
\end{align}
where $\M f_{\V \theta}: \R^M \to \R^d$ denotes the underlying (nonlinear) reconstruction operator that implements the minimization and depends on the hyperparameters $\V \theta$ of the regularization functional $g_\V \theta$. The underlying assumption here is that \eqref{Eq:Geninterpol} with $\V \theta$ fixed is well-defined with a unique global minimum. \revise{This is the standard requirement for most works involving convex analysis, including compressed sensing \cite{Donoho2006,Candes2007}.} 


\subsection{Equivariant Regularizers}
Since the measurement operator in \eqref{Eq:VarRecontruct} is linear, a change of amplitude of the signal $\M s$ by the multiplicative factor $\alpha$ results in a corresponding rescaling of the measurements, as in $\alpha \M y= \M H (\alpha\M s)$. Accordingly, we shall insist  that this property
also carry over to the reconstruction specified by \eqref{Eq:Geninterpol} in the noise-free scenario and,
more generally, by \eqref{Eq:VarRecontruct} for $\lambda$ sufficiently small, say, below some critical value $\lambda_0=\lambda(\M y)$. Specifically, when we amplify the measurements and signal by the same factor $\alpha\ne0$, 
we get that
\begin{align*}
\arg \min_{\M s \in \R^d} \tfrac{1}{2}\|\alpha {\bf y}-{\bf H \alpha s}\|_2^2 + \lambda_0\, g_\V \theta(\alpha\M s)=
\arg \min_{\M s \in \R^d} \tfrac{1}{2}\| {\bf y}-{\bf H }\M s\|_2^2 + \tfrac{\lambda_0}{\alpha^2}\, g_\V \theta(\alpha\M s),
\end{align*}
which, according to our requirement, needs to remain compatible with \eqref{Eq:VarRecontruct}.  The desired equivalence is achieved for
$\lambda=\frac{\lambda_0}{\alpha^2}\, \frac{g_\V \theta(\alpha\M s)}{g_\V \theta(\M s)}$, but only under the condition
that the ratio does not 
depend upon $\M s$.
This is to say that the effect of scaling needs to factor out of the regularization, which is a property that we shall refer to as {\em  amplitude equivariance}.

 \begin{definition}
A regularization functional $g: \Spc X \to \R$ is said to be
{\em amplitude-equivariant} if there exists a function $a: \R \to \R$ such that 
 $g(\alpha x)=a(\alpha)g(x)$ for any $x \in \Spc X$ and $\alpha \in  \R \backslash\{0\}$.
\end{definition}

By adding conditions of convexity to ensure that the solution of \eqref{Eq:VarRecontruct} or that of \eqref{Eq:Geninterpol} exist along with symmetry (which is standard in image processing), we now show that we can narrow down the options to $g(\M x)=|p(\M x)|^\gamma$, where $p$ is a seminorm on $\Spc X=\R^d$ and $\gamma\ge 1$.

\revise{To formulate this result rigorously, we start by isolating the structural properties of 
$g$ that are essential for our analysis.}
\begin{definition}
Let $g: \Spc X \to \R$ be a (regularization) functional on a Banach space $\Spc X$. Then, $g$ can be endowed with the following properties.
\begin{enumerate}
\item Convexity: $g\big(\lambda x + (1-\lambda)y\big)\le \lambda g(x) + (1-\lambda)g(y)$ for all $x, y \in \Spc X$ and $\lambda \in[0,1]$.
\item \revise{Coercivity: $g(x)\to +\infty$ as $\|x\|_{\Spc X}\to +\infty$.}
\item Symmetry: $g(x)=g(-x)$ for all $x \in \Spc X$.
\item Unbiasedness: $g(x)\ge g(0)$ for all $x \in \Spc X$.
\item Homogeneity of order $\gamma$ : $g(\alpha x)=|\alpha|^\gamma g(x)$ for all $x \in \Spc X$,
$\alpha \in \R \backslash\{0\}$ and $\gamma \in \R_{>0}$.

\end{enumerate}
\end{definition}
\revise{
Convexity and coercivity are the classical assumptions in convex analysis \cite{Ekeland1999}; together, they guarantee the existence of minimizers for the reconstruction problems \eqref{Eq:VarRecontruct} and \eqref{Eq:Geninterpol}.
The unbiasedness condition ensures that, in the absence of prior information, the preferred reconstruction is the trivial element 
$x=0$.
Finally, homogeneity is the key property that enforces the desired amplitude-equivariant behavior.}
\begin{theorem}
\label{Prop:seminorms}
Let $g:\Spc X \to \R$ be a convex, symmetric (and continuous) functional on the Banach space $\Spc X$. Then, the following statements are equivalent.
\begin{enumerate}
\item The functional $g$ is amplitude-equivariant.
\item The functional $g$ is $\gamma$-homogeneous for some $\gamma\ge 1$.
\item $g(x)=|p(x)|^\gamma$, where $p$ {is a seminorm}
and $\gamma\ge 1$ is the order of homogeneity of $g$.
\end{enumerate}
\end{theorem}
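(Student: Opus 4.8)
The plan is to close the cycle of implications $(1)\Rightarrow(2)\Rightarrow(3)\Rightarrow(1)$, which establishes all three equivalences at once. The step $(3)\Rightarrow(1)$ is immediate: if $g=|p|^\gamma$ with $p$ a seminorm, then $g(\alpha x)=|p(\alpha x)|^\gamma=|\alpha|^\gamma|p(x)|^\gamma=|\alpha|^\gamma g(x)$, which exhibits $g$ both as amplitude-equivariant with $a(\alpha)=|\alpha|^\gamma$ and as $\gamma$-homogeneous. The substance therefore lies in the two remaining steps. Before starting I would dispose of the degenerate case: if $g\equiv 0$ (or, more generally, $g$ is constant) the statements hold trivially with $p\equiv 0$, so I assume henceforth that there exists a point $x_0\in\mathcal{X}$ with $g(x_0)\neq 0$.

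For $(1)\Rightarrow(2)$ I would extract the exponent from the abstract relation $g(\alpha x)=a(\alpha)g(x)$. Applying it twice, $g(\alpha\beta x)=a(\alpha)a(\beta)g(x)=a(\alpha\beta)g(x)$, forces $a$ to be multiplicative on $\mathbb{R}\setminus\{0\}$ with $a(1)=1$, and symmetry yields $a(-1)=1$, hence $a(-\alpha)=a(\alpha)$. Evaluating at $x_0$ shows that $a(\alpha)=g(\alpha x_0)/g(x_0)$ is continuous (since $g$ is continuous and $\alpha\mapsto\alpha x_0$ is continuous) and that $a(\alpha)=a(\sqrt\alpha)^2$ is positive on $(0,\infty)$ (it cannot vanish, else $a\equiv 0$). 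A positive, continuous, multiplicative function on $(0,\infty)$ is necessarily a power: substituting $\alpha=\mathrm{e}^u$ reduces the equation to additivity of $u\mapsto\ln a(\mathrm{e}^u)$, whose only continuous solutions are linear, so $a(\alpha)=\alpha^\gamma$ and, by evenness, $a(\alpha)=|\alpha|^\gamma$ for all $\alpha\neq 0$. To pin down $\gamma\geq 1$ I would argue as follows: continuity of $g$ at $0$ excludes $\gamma<0$ (otherwise $g(\alpha x_0)\to\infty$ while $\alpha x_0\to 0$); $\gamma=0$ would force $g$ constant, which is excluded; and restricting $g$ to the line $t\mapsto g(tx_0)=|t|^\gamma g(x_0)$, convexity of $g$ forces this scalar map to be convex on $\mathbb{R}$, which (given $g(x_0)\neq 0$) is possible only when $\gamma\geq 1$, and in particular forces $g(x_0)>0$. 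This is the step where convexity is indispensable.

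For $(2)\Rightarrow(3)$ I would set $p:=g^{1/\gamma}$ and verify the seminorm axioms. Homogeneity gives $g(0)=0$, and convexity with symmetry yields $g(0)\leq\tfrac12 g(x)+\tfrac12 g(-x)=g(x)$, so $g\geq 0$ and $p$ is well-defined and nonnegative; absolute homogeneity $p(\alpha x)=|\alpha|\,p(x)$ is inherited directly from that of $g$. The only substantial axiom is subadditivity, which I would obtain by the classical Minkowski-gauge computation: for $x,y$ with $s:=p(x)>0$ and $t:=p(y)>0$, write $x+y=(s+t)\big(\tfrac{s}{s+t}\tfrac{x}{s}+\tfrac{t}{s+t}\tfrac{y}{t}\big)$, pull out $(s+t)^\gamma$ by homogeneity, bound the convex combination by convexity, and use $g(x/s)=g(x)/s^\gamma=1=g(y/t)$ to conclude $g(x+y)\leq(s+t)^\gamma$, that is, $p(x+y)\leq p(x)+p(y)$; the cases $s=0$ or $t=0$ follow by a limiting argument. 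Hence $p$ is a seminorm and $g=|p|^\gamma$.

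The main obstacle is not any isolated computation but the bookkeeping required to make the exponent $\gamma$ well-defined and to secure $\gamma\geq 1$: one must first guarantee a point where $g\neq 0$ in order to define $a$ as a ratio, then use continuity to solve the multiplicative Cauchy equation, and only afterwards invoke convexity to rule out exponents in $(0,1)$ and the constant case $\gamma=0$. I expect the two delicate points to be (i) the careful treatment of signs and of the degenerate/constant cases, so that $\gamma$ emerges unambiguously, and (ii) the subadditivity step in $(2)\Rightarrow(3)$, which is the only place the triangle inequality is genuinely earned.
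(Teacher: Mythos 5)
Your proposal is correct and takes essentially the same route as the paper: the multiplicative Cauchy functional equation (solved through continuity, after taking logarithms) to pass between amplitude-equivariance and $\gamma$-homogeneity, convexity to force $\gamma\ge 1$, and the Minkowski-gauge computation with $P=\lambda p(x)+(1-\lambda)p(y)$ to earn subadditivity of $p=g^{1/\gamma}$; only the bookkeeping differs (a cycle $(1)\Rightarrow(2)\Rightarrow(3)\Rightarrow(1)$ instead of the paper's list of implications). One caveat: your claim that constant $g$ is trivially covered is inaccurate, since for $g\equiv c\neq 0$ statement (1) holds while (2) and (3) fail; this edge case is, however, equally glossed over by the theorem and the paper's own proof, which implicitly assume $g(0)=0$.
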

The reason why ``continuity'' is parenthesized in the statement of Theorem \ref{Prop:seminorms} is that the hypothesis is superfluous in the finite-dimensional setting because: (i) all Banach norms, as well as the associated notions of continuity, are equivalent  on $\R^d$ \cite{Megginson2012introduction}; and (ii) all convex functions $g: \R^{d} \to \R$ are 
continuous on their domain \cite{Ekeland1999,Boyd2004convex}. We also note that the covered scenarios all require that $g$ be symmetric and unbiased with $g(0)=0$. 

\begin{proof}
First, we observe that $\gamma$th-order homogeneity (with $\alpha=-1$) implies symmetry. 
The statements in the theorem summarize the following chain of implications, with the first three items covering the classic equivalence: convex $1$-homogeneous functional = seminorm (see Appendix A, Definition \ref{Def:norm1}).
\begin{enumerate}

\item Convexity + 
$1$-homogeneity $\Rightarrow$ subadditivity. 
It suffices to take the convexity inequality with $\lambda=\tfrac{1}{2}$.

\item Subaddivity + 
$1$-homogeneity $\Rightarrow$ convexity: 
$g(\lambda x+(1-\lambda)y)\le g(\lambda  x) +  g\big((1-\lambda) y\big)= \lambda g( x) + (1-\lambda) g(y)$.


\item Continuity and $\gamma$th-order homogeneity $\Rightarrow$ $g(0)=0$:
Given some arbitrary $x_0 \in \Spc X \backslash\{0\}$, we construct the sequence $x_n=\tfrac{1}{n}x_0\ne 0$. It is converging to $0$ in $\Spc X$ because $\|x_n\|_{\Spc X}=\frac{1}{n}\|x_0\|_{\Spc X} \to 0$ as $n \to \infty$.
By invoking the $\gamma$-homogeneity and continuity of $g$, we then get that
$$\lim_{n\to \infty}g(x_n)=\lim_{n\to \infty}\left(\tfrac{1}{n}\right)^{\gamma} g(x_0)=0=g(\lim_{n\to \infty}x_n)=g(0).$$

\item Convexity + symmetry $\Rightarrow$ unbiasedness: Indeed, $g(0)=g\big(\tfrac{1}{2}x+\tfrac{1}{2}(-x)\big)\le \tfrac{1}{2}g(x)+\tfrac{1}{2}g(-x)= g(x)$. 
\item Convexity and $\gamma$th-order homogeneity  $\Rightarrow$ $\gamma\ge 1$: For any $\alpha \in(0,1)$ and $x \in \Spc X$, we must have that
$g(\alpha x + 0)=(\alpha)^\gamma g(x) \le \alpha g(x)$, which implies that $\gamma \ge 1$.

\item Ampliture-equivariance  $\Leftrightarrow$ $\gamma$th-order homogenity: Let $\alpha=\alpha_1 \alpha_2>0$.
Then, $g(\alpha x)=a(\alpha) g(x)=a(\alpha_1)a(\alpha_2) g(x)$, which implies that
$$a(\alpha_1\alpha_1)=a(\alpha_1)a(\alpha_2) \Leftrightarrow \log a(\alpha_1+\alpha_2)=
\log a(\alpha_1)+\log a(\alpha_2).$$
This means that the function
$\log a: \R_{> 0} \to \R$ is linear and hence of the form $\log a(\alpha)=C_0 \alpha$, which is the desired result with $C_0=\log \gamma$. The reverse implication is obvious.
\item Convexity and $\gamma$th-order homogeneity $\Leftrightarrow g(x)=|p(x)|^\gamma$, where $p$ is
convex and $1$-homogeneous.\\Indirect part: The function $g(x)=|p(x)|^\gamma$ is convex because it is the composition of two convex functionals $p: \Spc X \to \R_{\ge0}$ and $|\cdot|^\gamma: \R_{\ge0} \to \R_{\ge0}$. Moreover, it is such that
$g(x)=|p(\alpha x)|^\gamma=|\alpha p( x)|^\gamma=|\alpha|^\gamma g(x)$.\\[1ex]
Direct part: We now show that the convexity of $g(x)\ge 0$ implies that of $p(x)=\big(g(x)\big)^{1/\gamma}$, which is 1-homogeneous.
For any $x,y \in \Spc X$ with $p(x),p(y)\ne 0$ and $\lambda\in [0,1]$ and by letting $P=\lambda p(x) + (1-\lambda) p(y) >0$, we have that 
\begin{align}
\frac{g\big(\lambda x + (1-\lambda) y\big)}{P^\gamma} &= g\left(\frac{\lambda p(x)}{P} \frac{x}{p(x)} + \frac{(1-\lambda) p(y)}{P} \frac{y}{p(y)}\right)\nonumber \\
& \le \frac{\lambda p(x)}{P}  g\left(\frac{x}{p(x)}\right) + \frac{(1-\lambda) p(y)}{P} g\left(\frac{y}{p(y)}\right)\nonumber \\
& \ \ =\frac{\lambda p(x)}{P}  + \frac{(1-\lambda) p(y)}{P} =1,\label{Eq:Convexgamma}
\end{align}
where we also used that $g(x)=p(x)^\gamma$. We then take the $\gamma$th root of \eqref{Eq:Convexgamma}, which yields the desired convexity inequality $p\big(\lambda x  + (1-\lambda) y\big)\le P$. (The latter obviously also holds for 
$p(x)=0$ and/or $p(y)=0$ because of the homogeneity of $p$.)
\end{enumerate}
\end{proof}

In the context of regularization, the amplitude-equivariance property is fundamental 
for it ensures that the recovery procedure is covariant (through a proper adjustment of the regularization strength) to any global rescaling of the input data. 
In view of Theorem \ref{Prop:seminorms}, this reduces the options of acceptable regularizations to (semi)norms.
Since $|\cdot|^\gamma$ is increasing convex, we also note that there is no loss of generality
if we replace $g_{\V \theta}$ in \eqref{Eq:Geninterpol} by $p=|g|^{1/\gamma}$. The same holds true
for \eqref{Eq:VarRecontruct} under a suitable adjustment of the regularization parameter $\lambda$.

If we also want to make sure that the solution of Problem \eqref{Eq:VarRecontruct}
exists, irrespective of the system matrix $\M H$, we need $p$ to be coercive,
which then limits the options to the case where $p$ is a norm.

\section{Universal Equivariant Convex Regularizers}
\label{Sec:UnivConvexRegs}
So far, we have seen that a
 coercive, equivariant convex regularizer $g$ can always be replaced by an appropriate norm $p$ on $\R^d$. Our next step will be to show that this can all be reformulated using convex sets. We then attain universality by approximating the underlying regularization balls as closely as desired with the help of (learned) polytopes. The crucial issue there is to retain the norm property, which is fundamental to our argumentation. 

\subsection{Properties of the Unit Regularization Ball}
A basic result in functional analysis is that every seminorm $p$ on a vector space $\Spc X$ has an equivalent geometric description in terms of the characteristic set:
$B=B_p=\{x \in \Spc X: p(x)\le 1\}$, which happens to be a disk.
The  key to this equivalence is the possibility of recovering $p$ from $B$ with the help of the Minkowski functional, as stated in Theorem \ref{Theo:GaugeNorms}.

\begin{definition}
A subset $B \subset \Spc X$ of a topological vector space $\Spc X$ is called a {\em disk} if it is convex and center-symmetric.
The Minkowski functional (or gauge) associated with such a disk $B$ is 
\begin{align}
\label{Eq:MinkowskiFunct}
\mu_{B}(x)=\inf\{\lambda\in \R_{>0}:  x \in \lambda B\},
\end{align}
with the convention that $\mu_{B}(x)=\infty$ when the infimum in \eqref{Eq:MinkowskiFunct} does not exist.
\end{definition}

A set is said to be absorbing if,
for any $x \in \Spc X$, there exists some $r>0$ such that $x \in \lambda B$ for all $|\lambda|>r$. In particular, if $B$ is a disk, then it is absorbing if and only if it includes the origin as an interior point.

\begin{theorem}[{\cite[p. 120-121,115-154]{Narici2010}
}]
\label{Theo:GaugeNorms}
Let $B$ be an absorbing disk in a topological vector space $\Spc X$.
\begin{enumerate}
\item The functional $\mu_B$ specified by \eqref{Eq:MinkowskiFunct}
is a seminorm on $\Spc X$.
\item If $A=\{x \in \Spc X: \mu_B(x)<1\}$ and $C=\{x \in \Spc X: \mu_B(x)\le1\}$, then
$A\subseteq B \subseteq C$ and $\mu_A=\mu_B=\mu_C$. In particular, if $B$ is open (in the topology of $\Spc X$), then
$A=B$. Likewise, if $B$ is closed, then $B=C$.
\item The gauge $\mu_B$ is continuous on $\Spc X$ if and only if $B$ is a neighborhood of $0$ in $\Spc X$.

\item The gauge $\mu_B$ is a norm on $\Spc X$ if and only if $B$ does not contain any linear subspace of $\Spc X$, except $\{0\}$.
\end{enumerate}
Conversely, let $p$ be a (semi)norm on $\Spc X$. Then, $B=\{x \in \Spc X: p(x) \le1\}$
is convex, balanced, and absorbing with $p=\mu_B$.

\end{theorem}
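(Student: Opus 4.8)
This is a classical result on Minkowski gauges, so the plan is to verify each assertion in turn, using throughout that a disk is both convex and balanced (center-symmetry together with convexity yields $\alpha B \subseteq B$ whenever $|\alpha| \le 1$) and that an absorbing disk has $0$ as an interior point. For item 1, I would read the two seminorm axioms directly off the definition \eqref{Eq:MinkowskiFunct}. Absolute homogeneity $\mu_B(\alpha x) = |\alpha|\,\mu_B(x)$ follows from the change of variable $\mu = \lambda/|\alpha|$ in the infimum for $\alpha \neq 0$, using the center-symmetry $-B = B$ to absorb the sign, together with the trivial $\mu_B(0) = 0$. Subadditivity is the step where convexity is essential: if $x \in sB$ and $y \in tB$, then $\tfrac{x+y}{s+t} = \tfrac{s}{s+t}\tfrac{x}{s} + \tfrac{t}{s+t}\tfrac{y}{t}$ is a convex combination of two points of $B$, so $x + y \in (s+t)B$, and taking infima over admissible $s, t$ gives $\mu_B(x+y) \le \mu_B(x) + \mu_B(y)$.

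For item 2, the inclusions $A \subseteq B \subseteq C$ are immediate: $\mu_B(x) < 1$ forces $x \in \lambda B \subseteq B$ for some $\lambda < 1$ by balancedness, while $x \in B$ gives $\mu_B(x) \le 1$. The equalities $\mu_A = \mu_B = \mu_C$ I would obtain from homogeneity, reducing both to the identity that the infimum of the $\lambda > 0$ for which $\mu_B(x)/\lambda$ is less than (resp. at most) one equals $\mu_B(x)$. The two refinements are the genuinely topological points. For open $B$, the continuity of $t \mapsto tx$ lets me push any $x \in B$ outward to some $(1+\delta)x \in B$, whence $\mu_B(x) \le (1+\delta)^{-1} < 1$ and $B \subseteq A$. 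For closed $B$, I approximate $x \in C$ by the sequence $(1 - \tfrac1n)x \in A \subseteq B$ and invoke closedness to conclude $x \in B$, giving $C \subseteq B$.

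For item 3, since $\mu_B$ is already known to be a seminorm, the reverse triangle inequality $|\mu_B(x) - \mu_B(y)| \le \mu_B(x-y)$ reduces continuity everywhere to continuity at $0$; if $B$ is a neighborhood of $0$ then so is $\varepsilon B$, on which $\mu_B \le \varepsilon$, while conversely continuity makes $A = \mu_B^{-1}\big([0,1)\big)$ an open neighborhood of $0$ contained in $B$. For item 4, I would analyze the null space $N = \{x : \mu_B(x) = 0\}$, which is a subspace because $\mu_B$ is a seminorm; since $\mu_B(x) = 0$ forces $x \in \lambda_0 B \subseteq B$ for some $\lambda_0 < 1$, one has $N \subseteq B$, so $B$ contains a nontrivial subspace exactly when $\mu_B$ fails to separate points, and conversely any line inside $B$ carries $\mu_B \equiv 0$. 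The converse direction of the theorem is purely computational: for a seminorm $p$, the set $\{p \le 1\}$ is convex, balanced, and absorbing by the seminorm axioms, and $\mu_B = p$ follows once more from homogeneity.

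The main obstacle is not algebraic but topological: the open and closed refinements in item 2 and the continuity equivalence in item 3 are the only steps that genuinely invoke the topological-vector-space structure (continuity of scalar multiplication and closedness of $B$), whereas everything else is a direct consequence of the disk axioms and the homogeneity of the gauge.
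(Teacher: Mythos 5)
Your proof is correct, but note that the paper itself offers no proof of this theorem: it is imported directly from Narici--Beckenstein (the citation to \cite[p. 120-121,115-154]{Narici2010}), so there is no internal argument to compare against. What you have written is the standard textbook proof---convexity for subadditivity, symmetry/balancedness for absolute homogeneity, the identity $\mu_A=\mu_B=\mu_C$ via $\mu_B(x/\lambda)=\mu_B(x)/\lambda$, the dilation/contraction trick with $(1+\delta)x$ and $(1-\tfrac{1}{n})x$ for the open and closed refinements, reduction of continuity to continuity at $0$ via the reverse triangle inequality, and the null-space characterization $N=\{x:\mu_B(x)=0\}$ for item 4---and every step checks out, including the genuinely topological ones (continuity of $t\mapsto tx$ for the open case; closedness under the convergent sequence $(1-\tfrac{1}{n})x\to x$ for the closed case, which is legitimate even in non-metrizable spaces since that direction of closedness holds for any convergent net).

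One blemish in your preamble: the claim that ``an absorbing disk has $0$ as an interior point'' is false in a general topological vector space (take $B=\{x: |f(x)|\le 1\}$ for a discontinuous linear functional $f$; this is an absorbing disk with empty interior). It holds only in finite dimensions. If it were true in general, item 3 of the theorem would be vacuous, since $\mu_B$ would then always be continuous. Fortunately you never actually invoke it---your item 3 correctly treats ``$B$ is a neighborhood of $0$'' as a separate hypothesis, and elsewhere you only use absorbance to guarantee finiteness of the gauge---so the proof stands; simply delete that sentence. (The paper makes the same overstatement in the paragraph preceding the theorem; it is harmless there too because all of the paper's applications are in $\R^d$.)
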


The powerful aspect of this result is that the Banach disk $B$ does not even need to be closed (see Item 2).
We note that the fourth condition (induction of a norm) is automatically met if
the set $B$ in Theorem \ref{Theo:GaugeNorms}
 is bounded. 
 In such a scenario, $B$ can be assimilated to the unit ball of the norm $\|\cdot\|=\mu_{B}(\cdot)$, while the determination of the norm for a particular point $x \in \Spc X$ (see \eqref{Eq:MinkowskiFunct}) amounts to inflating
 $B$ by $\lambda$ (resp., deflating $B$ if $\lambda<1$) up to the limit point where $x$ enters  $\lambda B$ (resp, $x$ leaves $\lambda B$).
 
\begin{figure}[tpb]
    \centering
  \includegraphics[scale=1]{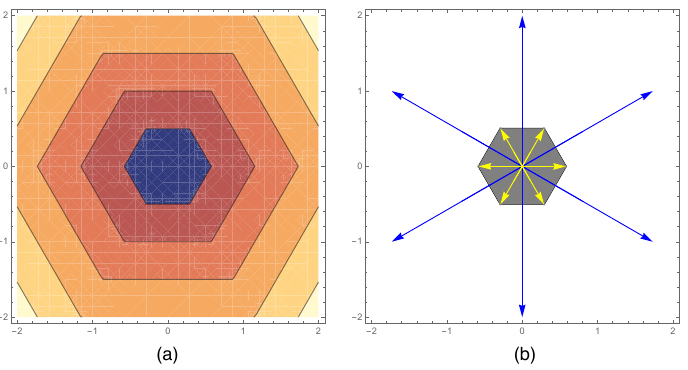}
    \caption{Unit ball of a polyhedral norm: (a) Contour plot of $\M x \mapsto \|\M L \M x\|_{\ell_1}$ with the darker central region in the plane being the unit ball $B_{p_1}$. (b) Overlay of the vertex (yellow) and facet (blue) vectors, the latter being the normals of the supporting hyperplanes of $B_{p_1}$.
 \label{Fig:UnitBall}
}
\end{figure}

To illustrate the concept, we show the contour-plot of the 
regularizer $p_1(\M x)=\|\M L \M x\|_{\ell_1}$ for $\M L=
\left(
\begin{array}{cc}
 0 & 1 \\
 -\frac{\sqrt{3}}{2} & -\frac{1}{2} \\
 \frac{\sqrt{3}}{2} & -\frac{1}{2} \\
\end{array}
\right) \in \R^{3 \times 2}$ in Figure \ref{Fig:UnitBall}. We observe that $B_{p_1}$ is symmetric, convex and bounded, where the latter reflects the property that the underlying regularization operator is injective. These properties are characteristic of a norm, in conformity with Item 4 in Theorem \ref{Theo:GaugeNorms}. The unit ball of this particular norm has
$N=6$ extreme points/vertices $\M g_1,\dots,\M g_N$ (overlaid as light vectors), so that it can also be described as the {\em convex hull} of its vertices: $B_{p_1}={\rm Conv}\{\M g_n\}_{n=1}^N$. We shall see that $B_{p_1}$ also admits a dual representation
as an intersection of half-spaces, whose normals (facet-vectors) are overlaid in blue.
 
 \subsection{Universal Parametric Polyhedral Regularizers}
 Theorem \ref{Prop:seminorms} implies that a convex equivariant regularization $g$ on $\R^d$ can be imposed via the minimization of a corresponding (semi)norm $p=p_\V \theta$. For mathematical convenience, we shall now restrict our attention
to the Banach setting where $p$ is a norm that is characterized by its unit ball $B_p$.
Since the latter is a convex, symmetric, and bounded subset of $\R^d$, it can itself be approximated arbitrary closely by a convex symmetric polytope that has sufficiently many vertices. This process is supported by Theorem \ref{Theo:GaugeNorms}:  to induce an equivalent (semi-)norm, it is sufficient to reproduce the unit ball of the regularizer.
By searching among all convex symmetric polytopes, we are actually spanning
the norms associated with the family of polyhedral Banach spaces (see Section \ref{Sec:PolyBanach}). 
These spaces have some remarkable geometric properties, which are made explicit in Section \ref{Sec:Theory} with the key results being collected in Theorem \ref{Theo:PolyBanach}. 

We now present of a variant of Theorem \ref{Theo:PolyBanach} that offers a practical answer to our design problem: a constructive parameterization of 
all polyhedral norms in terms of dictionaries or, alternatively, regularization operators.

\begin{theorem}[Dual parameterization of all polyhedral norms]
\label{Theo:ConstructPoly}
Let us consider the two complementary functionals
\begin{align}
\label{Eq:SynNorm0}
 \M x \mapsto &\|\M x\|_{1,\M G}=\min_{\M z \in \R^{\tilde N}}\{\|\M z\|_{\ell_1}: \M x=\M G \M z\}
 \\
\M x \mapsto &\|\M x\|_{\M G^\Tr,\infty}=\|\M G^\Tr\M x\|_{\ell_\infty}
\end{align}
parameterized by some rectangular matrix $\M G=[\M g_1 \ \cdots \ \M g_{\tilde{N}}] \in \R^{d \times \tilde{N}}$.
These are norms on $\R^d$ if and only if ${\rm rank}(\M G)=d$, in which case $\Spc X=(\R^d,\|\cdot\|_{1,\M G})$ and $\Spc X'=(\R^d,\|\cdot\|_{\M G^\Tr,\infty})$
form a dual 
pair of polyhedral Banach spaces.

The underlying polyhedral geometry is determined solely by the ``extreme points''
of $\M G$:
\begin{align}
\label{Eq:ExtofMat}
\{\M v_1,\dots,\M v_N\}={\rm Ext}\,{\rm Conv}\{\pm\M g_1,\dots,\pm\M g_{\tilde{N}}\}= {\rm Ext}\, \M G
\end{align}
with $N\le 2\tilde{N}$. 
These are the vertices of the unit ball of $\Spc X$ as well
as the facet vectors of the unit ball of $\Spc X'$, in conformity with the duality/polarity relations $B_{\Spc X'}=B^\circ_{\Spc X}$ (see \eqref{Eq:Polar} in Appendix A).

\end{theorem}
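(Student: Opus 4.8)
The plan is to route everything through Theorem \ref{Theo:GaugeNorms} by first recognizing the synthesis functional as the gauge of an explicit symmetric polytope. Set $B={\rm Conv}\{\pm\M g_1,\dots,\pm\M g_{\tilde N}\}$. I would first verify the identity $\|\M x\|_{1,\M G}=\mu_B(\M x)$: any feasible $\M z$ with $\M G\M z=\M x$ and $\|\M z\|_{\ell_1}=t>0$ writes $\M x/t=\sum_n(|z_n|/t)\,({\rm sign}(z_n)\M g_n)$ as a convex combination of the atoms $\pm\M g_n$, so $\mu_B(\M x)\le\|\M z\|_{\ell_1}$; conversely, any membership $\M x\in tB$ unpacks into coefficients $\M z$ with $\|\M z\|_{\ell_1}\le t$, so the infimum defining $\mu_B$ dominates (and equals) the minimum. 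Since $B$ is a symmetric polytope it is compact, convex, and balanced, and it is a neighborhood of the origin (equivalently absorbing, equivalently containing no nonzero subspace) exactly when ${\rm span}\{\M g_n\}={\rm range}(\M G)=\R^d$, i.e. ${\rm rank}(\M G)=d$. Items 1--4 of Theorem \ref{Theo:GaugeNorms} then give that $\mu_B=\|\cdot\|_{1,\M G}$ is a norm precisely under this rank condition, compactness of $B$ ruling out unbounded directions so that the induced seminorm is automatically a norm once $B$ is absorbing. For the analysis functional I would argue directly: $\M x\mapsto\|\M G^\Tr\M x\|_{\ell_\infty}$ is a seminorm as a norm composed with a linear map, and it vanishes only on $\ker(\M G^\Tr)={\rm range}(\M G)^\perp$, which is trivial iff ${\rm rank}(\M G)=d$. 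Both are therefore norms under the same condition, and finite-dimensionality makes $\Spc X$ and $\Spc X'$ Banach.

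Next I would establish the duality $B_{\Spc X'}=B_{\Spc X}^\circ$. The unit ball of $\Spc X$ is exactly $B_{\Spc X}=B={\rm Conv}\{\pm\M g_n\}$. Computing the dual norm of $\|\cdot\|_{1,\M G}$ amounts to maximizing a linear functional over the polytope $B$, and such a maximum is attained at a vertex, so
\begin{equation*}
\sup_{\|\M x\|_{1,\M G}\le 1}\langle\M x,\M y\rangle
=\max_{n}\,|\langle\M g_n,\M y\rangle|
=\|\M G^\Tr\M y\|_{\ell_\infty}
=\|\M y\|_{\M G^\Tr,\infty}.
\end{equation*}
This identifies $\|\cdot\|_{\M G^\Tr,\infty}$ as the dual norm of $\|\cdot\|_{1,\M G}$, so the two spaces form a dual pair of polyhedral Banach spaces in the sense of Theorem \ref{Theo:PolyBanach}; equivalently, $B_{\Spc X'}=\{\M y:\max_n|\langle\M g_n,\M y\rangle|\le1\}$ is the polar $B_{\Spc X}^\circ$, since the polar of a convex hull is cut out by the half-space constraints attached to its generators. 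This is the asserted relation $B_{\Spc X'}=B_{\Spc X}^\circ$.

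Finally I would handle the extreme-point and facet bookkeeping. Because a compact convex set in $\R^d$ equals the convex hull of its extreme points, and every extreme point of ${\rm Conv}\{\pm\M g_n\}$ must coincide with one of the generators $\pm\M g_n$, discarding redundant generators yields ${\rm Ext}\,\M G=\{\M v_1,\dots,\M v_N\}$ with $N\le 2\tilde N$ and $B_{\Spc X}={\rm Conv}\{\M v_n\}$ having the $\M v_n$ as vertices. Passing to the polar, each constraint $\langle\M v_n,\M y\rangle\le 1$ defines a supporting hyperplane of $B_{\Spc X'}=B_{\Spc X}^\circ$ with outer normal $\M v_n$, so the vertices of $B_{\Spc X}$ are the facet vectors of $B_{\Spc X'}$, and their symmetric pairing recovers the $\ell_\infty$ description $\|\M V^\Tr\M y\|_{\ell_\infty}\le1$ with $\M V=[\M v_1\ \cdots\ \M v_N]$. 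The step I expect to demand the most care is exactly this vertex/facet correspondence under polarity: one must confirm that, after deleting redundant atoms, the retained $\M v_n$ are genuine vertices of $B_{\Spc X}$ (not merely boundary points) and, dually, that each yields an honest facet of $B_{\Spc X'}$ rather than a lower-dimensional face. Both rely on the bounded, full-dimensional geometry guaranteed by ${\rm rank}(\M G)=d$ together with the standard face-lattice anti-isomorphism for polar polytopes.
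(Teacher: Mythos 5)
Your proof is correct, and it differs from the paper's mainly in organization rather than in substance. The paper never proves Theorem \ref{Theo:ConstructPoly} directly: it first establishes Theorem \ref{Theo:PolyBanach} for \emph{irreducible} vertex/facet sets (via Krein--Milman, the fundamental theorem for polytopes, a triangle-inequality argument for the synthesis form, and the polytope polarity theorem), and then obtains the present statement as a ``variant'' by remarking that the A- and S-forms are insensitive to redundant generators because the underlying convex hulls are preserved. You instead work bottom-up with the possibly redundant dictionary $\M G$ from the start: you identify $\|\cdot\|_{1,\M G}$ as the Minkowski gauge of ${\rm Conv}\{\pm\M g_1,\dots,\pm\M g_{\tilde N}\}$ by an explicit two-sided coefficient argument, obtain the norm property from Theorem \ref{Theo:GaugeNorms}, and derive the duality by computing the dual norm as the maximum of a linear functional over that polytope, which immediately yields $\|\M G^\Tr\cdot\|_{\ell_\infty}$; only at the end do you reduce to the extreme points. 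What your route buys is a self-contained argument in which the irrelevance of non-extreme atoms is transparent (they are simply absorbed into the hull), while the paper's route factors the work through a reusable geometric theorem and outsources the vertex/facet exchange to the cited polarity theorem---both arguments ultimately rest on the same two pillars (gauges and polarity). One slip of phrasing to fix: being a neighborhood of the origin (equivalently, absorbing) is \emph{not} equivalent to containing no nonzero subspace; for your compact $B$ the latter holds whatever the rank of $\M G$, while the former is exactly what characterizes ${\rm rank}(\M G)=d$. Since you invoke compactness separately to rule out degenerate directions, the argument itself is unaffected.
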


Note that our definition of the extreme points of a dictionary in \eqref{Eq:ExtofMat} includes some $\pm$ signs. This is because the vertices necessarily appear in opposite pairs due to the symmetry of the underlying Banach disks. The process of the determination of the extreme points of a dictionary matrix
$\M G$ is illustrated in Figure 2. The principle at work is that the induced unit regularization ball $B_{\Spc X_{\rm S}}$ is the symmetric convex hull of the dictionary elements.
\begin{figure}[tpb]
    \centering
  \includegraphics[scale=0.5]{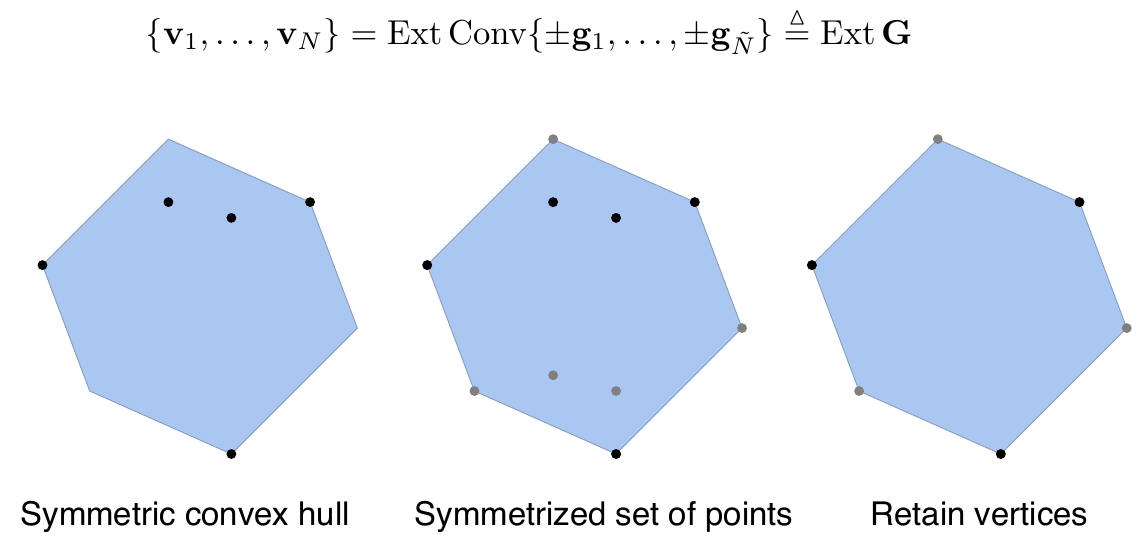}
    \caption{The extreme points of a matrix $\M G$ are given by the vertices of the symmetric convex hull of its column vectors  represented as dark points on the left display.\label{Fig:ExtDico}
}
\end{figure}

Geometrically, the rank condition on $\M G$ in Theorem \ref{Theo:ConstructPoly} translates into the unit ball $B_{\Spc X_{\rm S}}$ (or, equivalently,  
$B_{\Spc X_{\rm A}}$) being a bounded neighborhood of the origin, 
 which is the necessary and sufficient condition to induce a norm (see Theorem \ref{Theo:GaugeNorms}, Item 4). This means that there is actually no restriction and that the scheme is capable of producing any desired 
verticial or facial configuration. 
Equation \eqref{Eq:ExtofMat} also indicates that not all dictionary elements $\M g_n$ are active: the dictionary can be compressed by dropping elements that are not vertices or, otherwise, already represented by their signed counterpart. This reduction process
ends when $N=2\widetilde{N}$, in which case 
the matrix $\M G$ in \eqref{Eq:ExtofMat} is said to be {\em irreducible}.


By concentrating on the case of irreducible matrices and by invoking the property that any bounded set can be approximated to arbitrary precision
by a polytope \cite{Gruber1982,Bronstein2008}, we are able to convert Theorem \ref{Theo:ConstructPoly} into the  universality result of Theorem \ref{Theo:UniversalApprox}.

\begin{theorem}[Universality of polyhedral norm approximations]
\label{Theo:UniversalApprox}
Let $\|\cdot\|_{\Spc X}$ be any norm on $\R^d$. Then, for any $\epsilon \in (0,1)$, there exists
some  $\widetilde{N}\ge d$ and an irreducible matrix $\M F \in \R^{d \times \widetilde{N}}$ 
such that
\begin{align}
\label{Eq:uniFN}
\forall \M x \in \R^d:\quad \frac{1}{1 +\epsilon} \|\M x\|_{\Spc X} \le \|\M F^\Tr \M x\|_{\ell_\infty} \le \frac{1}{1-\epsilon} \|\M x\|_{\Spc X}.
\end{align}
Likewise, there exists some $N\ge d$ and an irreducible matrix $\M V \in \R^{d \times N}$ 
such that
\begin{align}
\label{Eq:uniVN}
\forall \M x \in \R^d:\quad \frac{1}{1 +\epsilon} \|\M x\|_{\Spc X} \le \min_{\M z \in \R^N}\{ \|\M z\|_{\ell_1}: \M x=\M V \M z\} \le \frac{1}{1-\epsilon} \|\M x\|_{\Spc X}.
\end{align}
Moreover, we have that $\epsilon=O\left(\frac{1}{n^{2/(d-1)}}\right)$, where $n= 2N$ 
is the number of extreme points of the underlying polyhedral unit ball $B={\rm Conv}(\M V)$ or $B=\{\M x \in \R^d: \|\M F^\Tr\M x\|_{\ell_\infty}\le 1\}$.
\revise{The same asymptotic rate holds if we substitute $n$ by $\tilde{n}=2\tilde{N}$, which is the number of facets of $B$.}
\end{theorem}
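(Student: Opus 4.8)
The plan is to reduce both analytic estimates to a single geometric statement: the approximation of the unit ball $B_{\Spc X}=\{\M x\in\R^d:\|\M x\|_{\Spc X}\le 1\}$ by symmetric polytopes, measured multiplicatively. By Theorem \ref{Theo:GaugeNorms}, $B_{\Spc X}$ is a bounded symmetric convex body with $\M{0}$ in its interior and $\|\cdot\|_{\Spc X}=\mu_{B_{\Spc X}}$. The key translation is that, for any symmetric polytope $P$ with gauge $\mu_P$, the inclusion sandwich $(1-\epsilon)B_{\Spc X}\subseteq P\subseteq (1+\epsilon)B_{\Spc X}$ is \emph{equivalent} to the two-sided bound $\tfrac{1}{1+\epsilon}\|\M x\|_{\Spc X}\le \mu_P(\M x)\le \tfrac{1}{1-\epsilon}\|\M x\|_{\Spc X}$, since the Minkowski gauge is inclusion-reversing ($A\subseteq C\Rightarrow \mu_C\le\mu_A$) and inverse-homogeneous under scaling ($\mu_{tC}=t^{-1}\mu_C$). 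Thus the entire theorem reduces to producing such a $P$ and identifying $\mu_P$ with the relevant polyhedral norm.

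First I would handle both representations with a single sandwiching polytope $P$. By central symmetry its vertices and its facet functionals come in opposite pairs, say $\{\pm\M v_1,\dots,\pm\M v_N\}$ and $\{\pm\M f_1,\dots,\pm\M f_{\tilde N}\}$; I set $\M V=[\M v_1\ \cdots\ \M v_N]$ and $\M F=[\M f_1\ \cdots\ \M f_{\tilde N}]$. The inclusion $(1-\epsilon)B_{\Spc X}\subseteq P$ together with boundedness forces $\M{0}\in{\rm int}(P)$, so both matrices have rank $d$ (whence $N,\tilde N\ge d$) and are irreducible by construction. Theorem \ref{Theo:ConstructPoly} then yields the two identities $\min\{\|\M z\|_{\ell_1}:\M x=\M V\M z\}=\mu_P(\M x)=\|\M F^\Tr\M x\|_{\ell_\infty}$, the synthesis gauge because $P={\rm Conv}\{\pm\M v_n\}$ and the analysis gauge because $P=\{\M x:\|\M F^\Tr\M x\|_{\ell_\infty}\le 1\}$. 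Both \eqref{Eq:uniVN} and \eqref{Eq:uniFN} then follow at once from the sandwich.

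The hard part will be the decay rate, for which I would invoke the classical quantitative theory of polytopal approximation \cite{Gruber1982,Bronstein2008}: every symmetric convex body in $\R^d$ admits, for each $m$, a symmetric polytope with at most $m$ vertices whose multiplicative distance to the body is $O(m^{-2/(d-1)})$. Converting the usual Hausdorff estimate into this multiplicative form is legitimate because $B_{\Spc X}$ is bounded and contains a Euclidean ball about the origin, so additive and relative errors are comparable; and only the \emph{upper} bound $O(m^{-2/(d-1)})$ is needed (the matching lower bound, which requires boundary smoothness, is irrelevant here). Taking $m=n=2N$ gives $\epsilon=O(n^{-2/(d-1)})$ for the vertex/synthesis count. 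For the facet count I would pass to the dual via the polarity relation $B_{\Spc X'}=B_{\Spc X}^\circ$ of \eqref{Eq:Polar}: applying the same vertex-economical approximation to $B_{\Spc X'}$ produces a polytope $Q={\rm Conv}\{\pm\M f_j\}$ with $\tilde N$ vertex-pairs, and its polar $P=Q^\circ$ then approximates $B_{\Spc X}=B_{\Spc X}^{\circ\circ}$ with exactly $\tilde N$ facet-pairs, the $\M f_j$ being the facet normals; since polarity reverses inclusions and inverts scalings (and only rescales $\M F$ by a constant, absorbed into $\epsilon$), this transfers the rate to $\epsilon=O(\tilde n^{-2/(d-1)})$ with $\tilde n=2\tilde N$. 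The main obstacle is therefore concentrated entirely in this passage from qualitative to sharp quantitative approximation; the remaining points — enforcing symmetry of the approximant (e.g.\ by replacing $P$ with ${\rm Conv}(P\cup -P)$), the rank and irreducibility conditions, and the constant-factor renormalizations — are routine.
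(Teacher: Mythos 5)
Your proof is correct and follows the same overall strategy as the paper's: translate both norm estimates into a multiplicative inclusion sandwich for the unit ball via the Minkowski gauge, then invoke the Bronshteyn--Ivanov/Gruber bound $O(n^{-2/(d-1)})$ for approximation of a symmetric convex body by a symmetric polytope. The quantitative step is routed slightly differently: the paper starts from the Euclidean Hausdorff estimate, passes to the Hausdorff distance measured in $\|\cdot\|_{\Spc X}$ by norm equivalence, and uses the Minkowski-sum characterization $\tilde d_H=\min\{\lambda\ge 0: B_{\Spc X}\subset B+\lambda B_{\Spc X},\ B\subset B_{\Spc X}+\lambda B_{\Spc X}\}$ to extract $(1-\tilde d_H)B_{\Spc X}\subset B\subset(1+\tilde d_H)B_{\Spc X}$, which is exactly the additive-to-relative conversion you assert; both arguments leave implicit the same small convexity iteration needed to pass from $B_{\Spc X}\subset B+\lambda B_{\Spc X}$ to $(1-\lambda)B_{\Spc X}\subset B$. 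Where you genuinely depart from (and improve on) the paper is the facet-count rate: the paper merely declares the rate in $\tilde n$ to be a standard consequence of the theory of polytopes and cites Gruber, whereas you derive it self-containedly by applying the vertex-economical approximation to the dual ball $B_{\Spc X'}=B^\circ_{\Spc X}$ and polarizing, so that the $\tilde N$ vertex pairs of $Q$ become the $\tilde N$ facet pairs of $P=Q^\circ$, with polarity reversing inclusions and the resulting $\frac{1}{1\pm\epsilon}$ rescalings absorbed into the constant. This addresses a real gap you correctly identified: a single Bronshteyn--Ivanov polytope controls the number of vertices but may have many more facets, so the $\tilde n$-rate cannot be read off from the $n$-rate for one and the same polytope.
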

We note that the lower bounds in \eqref{Eq:uniFN} and \eqref{Eq:uniVN} imply that ${\rm rank}(\M F)=d$ and ${\rm rank}(\M V)=d$, respectively.
\begin{proof}
Let $B$ be a convex, center-symmetric subset (i.e., a disk) of a Banach space $\Spc X$ that meets the inclusion conditions
\begin{align}
\label{Eq:BallInc}
B \subseteq C_0 B_{\Spc X}=\{\M x \inR^d: \|\M x\|_{\Spc X}\le C_0\} \quad \mbox{ and } \quad B_{\Spc X} \subseteq \frac{1}{c_0} B,
\end{align}
where $0< c_0 \le  C_0$ are two constants. The right-hand side of \eqref{Eq:BallInc} implies that
$B$ is absorbing, while the left-hand side tells us that $B$ is bounded. This ensures that
 $\mu_B$ is a valid norm on $\Spc X$ (by Theorem \ref{Theo:GaugeNorms}). We then exploit the homogeneity of this norm to rewrite \eqref{Eq:BallInc} as
\begin{align}
\label{Eq:EqNorms}
\forall x \in \Spc X,\quad c_0 \|x\|_{\Spc X}  \le \mu_B(x) \le C_0 \|x\|_{\Spc X},\end{align}
which is the canonical way to indicate that the two norms are topologically equivalent.

In our case of interest, $\Spc X=\R^d$ and $B$ is the polyhedral unit ball induced by $\M F$ or $\M V$, with $\M V$ having the capacity to encode all symmetric polytopes with $n=2N$ vertices. 
Fixing $n$ and viewing $B_{\Spc X}$ as a bounded convex set, we can then invoke a classical result \cite{Bronshteyn1975,Bronstein2008} that states the existence of an approximating polytope $B$ such that 
\begin{align}
\label{Eq:ErrorBoundPol}
d_H(B_{\Spc X},B)= \max\left\{{\sup_{\M x \in B_{\Spc X}} \inf_{\M y \in B}} \|\M x -\M y\|_2, \sup_{\M y \in B} \inf_{\M x \in B_{\Spc X}} \|\M x -\M y\|_2\right\} 
&\le  \frac{C_{\Spc X}}{n^{2/(d-1)}},
\end{align}
where $d_H(B_{\Spc X},B)$ is the Hausdorff distance between  $B_{\Spc X}$ and $B$, 
and $C_{\Spc X}$ is a constant that depends solely on $B_{\Spc X}$. This means that
one can make $d_H$ arbitrarily small by selecting a dictionary that has sufficiently many extreme points or, equivalently, a regularization operator $\M F^\Tr$ that induces sufficiently many facets (see Theorem \ref{Theo:PolyBanach} for additional insights).

Since all finite-dimensional norms are equivalent, the estimate in \eqref{Eq:ErrorBoundPol} also holds if we substitute $\|\cdot\|_{2}$ 
by any other norm under some adaptation of the bounding constant $C_{\Spc X}$. The form that is suitable for our purpose is
\begin{align*}
\tilde d_H(B_{\Spc X},B)&=\max\left\{\sup_{\M x \in B_{\Spc X}} \inf_{\M y \in B} \|\M x -\M y\|_{\Spc X},\, \sup_{\M y \in B} \inf_{\M x \in 
B_{\Spc X}} \|\M x -\M y\|_{\Spc X }\right\},
\\
&= \min \{\lambda\ge 0: B_{\Spc X} \subset B + \lambda B_{\Spc X}, B \subset B_{\Spc X} + \lambda B_{\Spc X}\}.
\end{align*}
where the ``+" symbol applied to sets denotes the Minkowski sum.
The second form implies that $B \subset (1 + \tilde d_H)B_{\Spc X}$ and $(1-\tilde d_H) B_{\Spc X} \subset B$ under the implicit assumption that $0\le \tilde d_H<1$. We make the link with \eqref{Eq:BallInc}
by setting $c_0=\frac{1}{1+\tilde d_H}$ and $C_0=\frac{1}{1-\tilde d_H}$, with \eqref{Eq:EqNorms} then  yielding the desired result.
\end{proof}

\revise{Since $\epsilon$ in Theorem \ref{Theo:UniversalApprox} can be arbitrarily small, we can push the scheme by growing the dictionary 
to achieve an isometry; that is, an exact norm equivalence.
As for the last statement in Theorem \ref{Theo:UniversalApprox}, it is a standard result in the theory of polytopes which reflects the property that approximation quality is governed by surface area partitioning \cite{Gruber1982,GRUBER1993}.
This favors the choice of ``regular'' approximating polytopes for which the number of vertices $n$ and facets $\tilde{n}$ are proportional \cite{Gruber1998}. 
It is also known that the exponent $2/(d-1)$
is optimal and cannot be improved in general, while the proportionality constant depends on the affine surface area of the convex body $B_\Spc X$ being approximated \cite{GRUBER1993}.
 }

\section{Supporting Banach-Space Theory}
\label{Sec:Theory}
\subsection{Nomenclature and notation}

We start by recalling a few geometric and topological concepts.
Let $C \subset \R^d$ denote some proper subset of $\R^d$. Then, $C$ said to be {\em convex} if and only if $\lambda \M x + (1-\lambda) \M y \in
C$ for all $\M x, \M y \in C$ and $\lambda \in [0,1]$. Likewise, it is said be 
{\em bounded} if there exists an Euclidean neighborhood $N(\M x_0;R)=\{\M x \inR^d: \|\M x-\M x_0\|_2\le R\}$ of radius $R>0$ and center $\M x_0$ such that 
$C \subset N(\M x_0;R)$. The set $C$ is (topologically) {\em closed} if it contains all its boundary points or, equivalently, if its complement in $\R^d$ is open.
Since all finite-dimensional norms are equivalent, the closedness of $C$ can be formalized as follows: for any $\M y \notin C$, there exists some Euclidean neighborhood $N(\M y,\epsilon)$ with $\epsilon>0$ whose intersection with $C$ is empty. 
In finite dimensions, there is also an equivalence between the mathematical notion of {\em compactness} and the property of being closed and bounded  (Heine-Borel theorem).

A {\em disk} $D$ is a convex, center-symmetric (or balanced) subset of $\R^d$. The prototypical example of a {\em compact disk}
is the unit ball $B_{\Spc X}=\{\M x\in \R^d: \|\M x\|_{\Spc X}\le 1\}$ associated with any admissible norm $\|\cdot\|_{\Spc X}$ on $\R^d$.

A {\em polyhedron} $Q \subset \R^d$ is the intersection of a finite number ($M$) of half-spaces. Formally,
$Q=\bigcap_{m=1}^M K(\M f_m,\alpha_m)$,
where \begin{align}
K(\M f_m,\alpha_m)=f\{\M x \in \R^d: \langle \M f_m, \M x \rangle\le \alpha_m\}
\end{align}
is a half-space characterized by its outward normal vector $\M f_m \in \R^d$ (which may or may not be normalized) and an offset $\alpha_m \in \R$. The representation is called
{\em irreducible} when the number of half-spaces $M$ is minimal. In such a scenario, the corresponding hyperplanes 
\begin{align}
H(\M f_m,\alpha_m)=\{\M x \in \R^d: \langle \M f_m, \M x \rangle= \alpha_m\}
\end{align}
outline the boundary of $Q$ and are called the {\em supporting hyperplanes} of the polyhedron. The boundary of the polyhedron can then be subdivided into $M$ facets $F_m=Q\cap H(\M f_m,\alpha_m)$, each of which is the intersection of the body and its corresponding supporting hyperplane. 

A {\em convex polytope} $P$ is the {\em convex hull} of a finite number ($K$) of points, which is written as
$P={\rm Conv}\{\M p_{1},\dots,\M p_{K}\}$ with $\M p_{k} \in \R^d$. Alternatively, $P$ can be described as a convex compact subset of $\R^d$
that has a finite number of {\em extreme points}. These extreme points are ${\rm Ext}P=\{\M v_1,\dots,\M v_{N}\}$ with $N\le K$; they are the {\em vertices} of the polytope. They necessarily form a subset of the  $\M p_{k}$'s in the first definition and yield the most concise (irreducible) representation of the polytope as $P={\rm Conv}\{\M v_n\}_{n=1}^{N}$.

The fundamental theorem for polytopes states that that every bounded polyhedron is a convex polytope and vice versa \cite{Ziegler2012Polytopes}.
In fact, as we shall see, the two representations are related by duality, which also means that it is always possible to switch from one to the other and to use the one that is the most appropriate for a given task or mathematical derivation.

To make the link with the geometrical formulation of Chandrasekaran et al. \cite{Chandrasekaran2012}, we also recall their definition of the
{\em atomic norm} induced by a dictionary $\Spc A=\{\M a\}$ of atoms $\M a \in \R^d$:
\begin{align}
\label{Eq:Atomic}
\|\M x\|_{\Spc A}=\inf\left\{\sum_{\M a \in \Spc A} c_{\M a}: \M x =\sum_{\M a \in A} c_{\M a}\M a, \ c_{\M a}\in \R_{\ge 0}\  \forall \M a \in \Spc A\right\}.
\end{align}

\subsection{Polyhedral Banach Spaces}
\label{Sec:PolyBanach}
We now present the theoretical framework that supports the derivation of Theorem \ref{Theo:ConstructPoly} 
through the characterization of all Banach spaces whose unit balls have a finite number of extreme points.

Klee calls a Banach space $\Spc X$ polyhedral if the unit ball of every finite-dimensional subspace of $\Spc X$ is a polyhedron \cite{Klee1960}. After having identified $c_0(\N)$ as the (infinite-dimensional) prototype of such spaces, he asked whether there also exist some infinite-dimensional polyhedral spaces that are reflexive. Lindenstrauss then showed the impossibility of such a construction by proving that there is no infinite-dimensional conjugate space $\Spc X'$ that is polyhedral \cite{Lindenstrauss1966}. In particular, this implies that $\ell_1(\N)=(c_0(\N))'$ 
is not polyhedral in the strict sense\footnote{While $\ell_1(\N)$ has an infinite number of extreme points, these are countable, which sets this space apart from most other Banach spaces---in particular, the strictly-convex ones.} of the term.

Fortunately for us, the situation is much more favorable in finite dimensions where all Banach spaces are reflexive and where the duals of polyhedral spaces are polyhedral as well.

\begin{definition}
\label{Def:PolyhedralSpace}
A finite-dimensional Banach space $\Spc X$ is said to be {\em polyhedral} if its unit ball $B_{\Spc X}$ has a finite number of extreme points.
\end{definition}
Since $B_{\Spc X}$ is compact 
by construction, it is the convex hull of its extreme points and, hence, a polyhedron (or polytope), which justifies the nomenclature. 
In the geometric context of polytopes, these extreme points are called vertices, and one can then take advantage of all the mathematical tools available in this area.

The next theorem expresses the remarkable consequences of the association between the vertices (resp., the facets) of $B_{\Spc X}$ and the facets (resp., vertices) of the unit ball of its topological dual. Most of the results in Theorem \ref{Theo:PolyBanach} are standard in the theory of convex polytopes \cite{Ziegler2012Polytopes, Grunbaum2000Polytope,Brondsted2012Polytopes}, 
even though they are rarely stated in the present format (Banach setting). 
This is not so for the S-form (Item 4), which coincides with the atomic norm of the dictionary
formed by the vertices of $B_{\Spc X}$ because its vertices come in signed pairs.
The relation $\mu_{{\rm Conv}(\Spc A)}=\|\cdot\|_{\Spc A}$ with $\Spc A={\rm Ext}(B_{\Spc X})$ is given  in \cite{Chandrasekaran2012} and identified as a special case of Bonsall's general decomposition theorem for Banach spaces \cite{Bonsall1991}.
Given the crucial role of these equivalences in our analysis, we are providing a self-contained proof to clarify and reinforce the concepts.

\begin{definition}
A collection $\{\M v_1,\dots, \M v_N\}$ of vectors (vertices or facet-normals) is said to be {\em irreducible}
if ${\rm Ext}\,{\rm Conv}\{\M v_1,\dots, \M v_N\}=\{\M v_1,\dots, \M v_N\}$.
\end{definition}
If a given collection of vectors (or dictionary elements) does not meet the above condition, then one can apply a standard procedure to reduce it to a (unique) minimal set, which is formed by the vertices of its convex hull.



\begin{theorem}[Geometry of polyhedral Banach spaces]
\label{Theo:PolyBanach}
Let $\Spc X=(\R^d,\|\cdot\|_{\Spc X})$ and $\Spc X'=(\R^d,\|\cdot\|_{\Spc X'})$ be a dual pair of polyhedral Banach spaces.
Then, the Banach topology of $\Spc X$ has the following equivalent geometric descriptions in terms of (irreducible) vertices or facet-vectors.
\begin{enumerate}
\item $\|\cdot\|_{\Spc X}=\mu_{B_{\Spc X}}(\cdot)$ where the polyhedral unit ball $B_{\Spc X}$ is the convex hull of its vertices $\{\M v_1,\dots,\M v_N\}={\rm Ext} B_{\Spc X}$ with $N> d$, which leads to\begin{align}
\label{Eq:VertexUnitBall}
B_{\Spc X}&={\rm Conv}\{\M v_1,\dots,\M v_N\} \nonumber \\
&=\{\M x=\sum_{n=1}^N \lambda_n \M v_n: \lambda_n\ge 0 \mbox{ for all } n  \mbox{ and } \sum_{n=1}^N \lambda_n=1 \}.
\end{align}
\item $\|\cdot\|_{\Spc X}=\mu_{B_{\Spc X}}(\cdot)$ where $B_{\Spc X}$ is an intersection of $M$ half-spaces given by
\begin{align}
\label{Eq:FacialUnitBall}
B_{\Spc X}&=\bigcap_{m=1}^M K(\M f_m,1)=\{\M x \in \R^d: \langle \M f_m, \M x \rangle\le 1\mbox{ for all } m\},
\end{align}
with the $\M f_m$ (facet-vectors) being the directed normals of the facets $F(\M f_m)=H(\M f_m,1)\cap B_{\Spc X}$ of $B_{\Spc X}$.
\item Analysis form of the norm with $\M F=[\M f_1 \cdots \M f_M]\in \R^{d \times M}$:
\begin{align}
\|\M x\|_{\Spc X}=\max\{|\langle \M f_m,\M x\rangle|\}_{m=1}^M=\|\M F^\Tr \M x\|_{\ell_\infty}.
\label{Eq:AnalNorm}
\end{align}
\item Synthesis form of the norm with $\M V=[\M v_1 \cdots \M v_N]\in \R^{d \times N}$:
\begin{align}
\|\M x\|_{\Spc X}=\min_{\M z \in \R^N}\{\|\M z\|_{\ell_1}: \M x=\M V\M z\}
\label{Eq:SynNorm}.
\end{align}
\end{enumerate}
The same relations hold for the dual norm with the role of the vertices and facets being interchanged. For instance, we have that
\begin{align}
B_{\Spc X'}&={\rm Conv}\{\M f_1,\dots,\M f_M\} \nonumber\\
&=\bigcap_{n=1}^N K(\M v_n,1)=\{\M y \inR^d: \|\M y\|_{\Spc X'}=\|\M V^\Tr  \M y\|_{\ell_\infty}\le 1\}.
\end{align}
This duality is examplified by the relation
\begin{align}
\label{Eq:FacetIneq}
\forall (\M x,\M f_m) \in B_{\Spc X} \times {\rm Ext}B_{\Spc X'}, \quad -1\le \langle \M f_m, \M x \rangle\le 1
\end{align}
with the upper/lower bound being achieved if and only if $\M x \in F(\pm\M f_n)$ with $F(\M f_n)=B_{\Spc X}\cap H(\M f_m,1)$ being the facet with normal vector $\M f_m$.
%
%
%
%
%
\end{theorem}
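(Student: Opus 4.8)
The plan is to fix the compact symmetric unit ball $B_{\Spc X}$ of $\|\cdot\|_{\Spc X}$, read off all four representations from it by elementary convex geometry, and then transport everything to $\Spc X'$ by polarity. First I would record the structural facts that come for free: since $\|\cdot\|_{\Spc X}$ is a norm on $\R^d$, its unit ball $B_{\Spc X}$ is convex, center-symmetric, bounded, and a neighborhood of the origin, hence compact by Heine--Borel; by the polyhedrality hypothesis it has finitely many extreme points. The Minkowski--Krein--Milman theorem in finite dimensions then gives $B_{\Spc X}={\rm Conv}({\rm Ext}\,B_{\Spc X})$, which is exactly \eqref{Eq:VertexUnitBall} (Item 1); since $B_{\Spc X}$ is full-dimensional it needs at least $d+1$ affinely independent vertices, and symmetry forces these to come in $\pm$ pairs, so in fact $N\ge 2d>d$. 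Invoking the fundamental theorem for polytopes, this bounded polytope is simultaneously an intersection of finitely many half-spaces; because $0$ is an interior point, every supporting hyperplane has a strictly positive offset that I normalize to $1$ by rescaling the outward normal, yielding $B_{\Spc X}=\bigcap_{m=1}^M K(\M f_m,1)$, i.e.\ \eqref{Eq:FacialUnitBall} (Item 2), with the $\M f_m$ again appearing in $\pm$ pairs by symmetry.

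For Items 3 and 4 I would pass through the Minkowski functional, which equals $\|\cdot\|_{\Spc X}$ by Theorem \ref{Theo:GaugeNorms}. For the analysis form, the half-space description together with $-\M f_m$ being a facet-vector shows that $\M x\in\lambda B_{\Spc X}$ holds if and only if $\|\M F^\Tr\M x\|_{\ell_\infty}\le\lambda$, so taking the infimum over admissible $\lambda$ gives $\mu_{B_{\Spc X}}(\M x)=\|\M F^\Tr\M x\|_{\ell_\infty}$, which is \eqref{Eq:AnalNorm}. For the synthesis form, I would expand $\mu_{B_{\Spc X}}$ of ${\rm Conv}\{\M v_n\}$ and reparameterize the convex weights $\mu_n\ge0$ (with $\sum_n\mu_n=1$) of the scaled point $\M x/\lambda$ by $c_n=\lambda\mu_n$, obtaining $\mu_{B_{\Spc X}}(\M x)=\inf\{\sum_n c_n: c_n\ge0,\ \sum_n c_n\M v_n=\M x\}$, which is precisely the atomic norm \eqref{Eq:Atomic} of the vertex dictionary. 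Because the vertex set is closed under negation, each signed coefficient $z_n$ in a representation $\M V\M z=\M x$ can be routed through either $\M v_n$ or its negation $-\M v_n$ (also a vertex), so the nonnegative-coefficient and the signed representations carry matching total mass $\sum_n c_n=\|\M z\|_{\ell_1}$; the two optimal values therefore coincide, giving \eqref{Eq:SynNorm} (Item 4). Compactness of $B_{\Spc X}$, equivalently coercivity of $\|\cdot\|_{\ell_1}$ on the closed affine solution set $\{\M z:\M V\M z=\M x\}$, guarantees that the infimum is attained, so the ``$\min$'' is legitimate.

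For the dual statements I would first establish $B_{\Spc X'}=B^\circ_{\Spc X}$, the standard identity that the dual-norm unit ball is the polar $B^\circ_{\Spc X}=\{\M y\in\R^d:\langle\M y,\M x\rangle\le1\ \forall \M x\in B_{\Spc X}\}$, and then swap vertices and facets. From Item 1, $B^\circ_{\Spc X}=\bigcap_n K(\M v_n,1)$, so the vertices $\M v_n$ of $B_{\Spc X}$ are the facet-vectors of $B_{\Spc X'}$. From Item 2, $B_{\Spc X}=\bigcap_m K(\M f_m,1)=({\rm Conv}\{\M f_m\})^\circ$; since ${\rm Conv}\{\M f_m\}$ is compact, symmetric, and a neighborhood of $0$ (boundedness of $B_{\Spc X}$ being equivalent to its polar containing the origin in its interior), the bipolar theorem yields $B_{\Spc X'}={\rm Conv}\{\M f_m\}$, so the facet-vectors $\M f_m$ of $B_{\Spc X}$ are exactly the vertices of $B_{\Spc X'}$. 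Re-running Items 1--4 on $B_{\Spc X'}$ with the roles interchanged then produces every dual-norm formula, e.g.\ $\|\M y\|_{\Spc X'}=\|\M V^\Tr\M y\|_{\ell_\infty}$. The inequality \eqref{Eq:FacetIneq} is immediate from Item 2: membership $\M x\in B_{\Spc X}$ means $\langle\pm\M f_m,\M x\rangle\le1$ for every $m$, i.e.\ $-1\le\langle\M f_m,\M x\rangle\le1$, with the upper (resp.\ lower) bound attained precisely when $\M x$ lies on the facet $F(\M f_m)$ (resp.\ $F(-\M f_m)$).

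I expect the genuinely non-routine step to be Item 4: identifying the Minkowski functional with the atomic norm and then matching the latter against the \emph{signed} $\ell_1$-minimization, together with certifying that the minimum is attained. This is exactly the ``S-form'' that the surrounding discussion flags as lying outside standard polytope theory, and it is where the central symmetry of the unit ball is used in an essential way. The second delicate point is the bookkeeping in the duality argument, namely verifying that facet-vectors and vertices genuinely occur in $\pm$ pairs and that all topological hypotheses of the bipolar theorem (compactness, symmetry, neighborhood-of-origin) are in force, so that the vertex--facet correspondence is an exact equality rather than a mere inclusion.
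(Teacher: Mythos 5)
Your proof is correct, and its overall skeleton coincides with the paper's: Krein--Milman for Item 1, the fundamental theorem of polytopes for the equivalence of Items 1 and 2, central symmetry plus the gauge identification of Theorem \ref{Theo:GaugeNorms} for Item 3, and polarity/bipolarity for the vertex--facet exchange and for \eqref{Eq:FacetIneq}. The one place where you genuinely diverge is Item 4, which you correctly single out as the non-routine step. The paper proves \eqref{Eq:SynNorm} by a direct sandwich: writing $\M x/\|\M x\|_{\Spc X}\in{\rm Conv}\{\M v_1,\dots,\M v_N\}$ produces an explicit nonnegative $\M z$ with $\M V\M z=\M x$ and $\|\M z\|_{\ell_1}=\|\M x\|_{\Spc X}$ (so the minimum is attained by construction), while the reverse inequality follows from the triangle inequality of $\|\cdot\|_{\Spc X}$ together with $\|\M v_n\|_{\Spc X}=1$. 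You instead identify $\mu_{B_{\Spc X}}$ with the atomic norm \eqref{Eq:Atomic} of the vertex dictionary by reparameterizing the convex weights, then match the nonnegative-coefficient program against the signed $\ell_1$ program by routing negative coefficients through the opposite vertices, and finally certify attainment by coercivity of $\|\cdot\|_{\ell_1}$ on the closed affine solution set. Both arguments hinge on the same two facts (the vertices come in $\pm$ pairs and lie on the unit sphere); the paper's version is shorter and gets attainment of the minimum for free, whereas yours makes the link to Chandrasekaran's atomic norm explicit---a connection the paper only asserts in the surrounding discussion---and never invokes the triangle inequality of the ambient norm. A small bonus on your side: you actually justify $N>d$ (indeed $N\ge 2d$, by full-dimensionality and symmetry), which the paper states without comment.
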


\begin{proof}
\item Item 1. We can either view \eqref{Eq:VertexUnitBall} as the definition of a convex polygon or as a consequence of the Krein-Milman theorem which states that any compact convex set is the convex hull of its extreme points, with the latter being the vertices of $B_{\Spc X}$. One then recovers the norm by taking the Minkowski functional of $B_{\Spc X}$ (see Theorem \ref{Theo:GaugeNorms}).\vspace{1ex}
\item 1 $\Leftrightarrow$ 2. Here, we invoke the fundamental theorem for polytopes \cite[Theorem 1.1, p. 29]{Ziegler2012Polytopes} which states the equivalence between the vertex-based
and facial representations of a polytope.\vspace{1ex}

\item 2 $\Leftrightarrow$ 3. The symmetry of the $B_{\Spc X}$ (Banach disk) implies that its facet-vectors necessarily come in (signed) pairs $\pm\M f_n$. We then identify the unit ball $\{\M x \in \R^d: \|\M F^\Tr \M x\|_{\ell_\infty}\le1 \}$ as the convex set specified by 
\eqref{Eq:FacialUnitBall} by observing that $K(\M f_m,1)\cap K(-\M f_m,1)=\{\M x \in \R^d: |\langle \M f_m, \M x \rangle|\le 1\}$. Since the unit balls are the same, the induced norms are identical, as direct consequence of
Theorem \ref{Theo:GaugeNorms}.\vspace{1ex}

\item 1 $\Leftrightarrow$ 4. For any $\M x \in \R^d\backslash \{\M 0\}$, we have that $\frac{\M x}{\|\M x\|_{\Spc X}} \in B_{\Spc X}$ which,
in view of \eqref{Eq:VertexUnitBall}, implies the existence of some $\M z=\|\M x\|_{\Spc X}\,(\lambda_1,\dots,\lambda_N) \in \R^N$
such that $\M x=\M V \M z$ with $\|\M z\|_{\ell_1}=\|\M x\|_{\Spc X} \sum_{n=1}^N |\lambda_n|= \|\M x\|_{\Spc X}$. This proves that the functional $p(\M x)=\min\{\|\M z\|_{\ell_1}: \M x=\M V \M z\}$ is well-defined with $p(\M x)\le \|\M x\|_{\Spc X}$.
Conversely, let $\M x=\sum_{n=1}^N z_n \M v_n$. Then, by the triangle inequality, we have that
$\|\M x\|_{\Spc X}\le \sum_{n=1}^N |z_n| \|\M v_n\|_{\Spc X}=\sum_{n=1}^N |z_n|\times 1=\|\M z\|_{\ell_1}$. This also hold for the infimum configuration so that $\| \M x\|_{\Spc X}\le p(\M x)$. By combining these two inequalities, we get $p(\M x)=\| \M x\|_{\Spc X}$.\vspace{1ex}

\item Duality between vertices and facets: This is a central theme in the theory of polytope that goes under the name of polarity \cite[Theorem 9.1, p. 57]{Brondsted2012Polytopes}.  When applied to the present setting, the polarity theorem for polytopes implies that $B_{\Spc X'}=B_{\Spc X}^\circ=\bigcap_{n=1}^N K(\M v_n,1)$ is the polar of $B_{\Spc X}={\rm Conv}\{\M v_1,\dots,\M v_N\}$, while $B_{\Spc X}=B_{\Spc X}^{\circ\circ}=B_{\Spc X'}^\circ=\bigcap_{m=1}^M K(\M f_m,1)$ is the polar of 
$B_{\Spc X'}={\rm Conv}\{\M f_1,\dots,\M f_M\}$. (In the finite-dimensional setting, all Banach spaces are reflexive with $B_{\Spc X}=B_{\Spc X''}$, which makes the definition of polarity---see \eqref{Eq:Polar} in the appendix---compatible  with that of Banach duality.) In view of the argumentation in Item 3, the polyhedral representation
of $B_\Spc X$ is equivalent to
$B_{\Spc X}=
\{\M x \in \R^d: \max\{|\langle \M f_m, \M x \rangle|\le 1\}_{m=1}^M$, which implies \eqref{Eq:FacetIneq}.
The equality holds for $\langle \M f_m, \M x \rangle= \pm 1$, which happens if and only if
$\M x \in B_{\Spc X}\cap H(\M f_m,1)$ or 
$\M x \in B_{\Spc X}\cap H(-\M f_m,1)=F(-\M f_m)$, where $H(\M f_m,1)$ is the supporting hyperplane with outward normal vector $\M f_m$.
%
\end{proof}

The geometric interpretation of \eqref{Eq:FacetIneq} is that $\M f_m$, 
which is now identified as a vertex of $B_{\Spc X'}$, 
is the common Banach conjugate\footnote{The general duality inequality for Banach spaces is $|\langle x,y\rangle|\le \|x\|_{\Spc X}\|y\|_{\Spc X'}$ for all $(x,y) \in \Spc X \times  \Spc X'$. Two such elements are said to be Banach conjugates if they saturate the inequality with $\langle x,y\rangle=\|x\|_{\Spc X}\|y\|_{\Spc X'}$.} of all boundary points lying on the corresponding facet of $B_{\Spc X}$.

An important observation is that \eqref{Eq:AnalNorm} and \eqref{Eq:SynNorm} also hold for ``non-minimal'' sets of vertices/facets.
Indeed, the enabling property that yields the equivalence of norms in the proof of Theorem \ref{Theo:PolyBanach} is the preservation of the underlying convex hull(s). This naturally leads to the statement of the more practical variant of the characterization in Theorem 
\ref{Theo:ConstructPoly}. 
Likewise, because of the presence of the absolute value in the underlying norms, it is possible to 
reduce the width of the matrices $\M V$ and $\M F$ in \eqref{Eq:AnalNorm} and  \eqref{Eq:SynNorm} by two, so that
\begin{align}\|\M x\|_{\Spc X}&=\min_{\M z \in \R^M}\{\|\M z\|_{\ell_1}: \M x=\M V_+\M z\}\nonumber\\&
=\|\M F_+^\Tr \M x\|_{\ell_\infty}
\end{align}
with $\M V_+\inR^{d \times N/2}$ and $\M F_+\inR^{d \times M/2}$, which are the two concrete forms that will be referred to as ``irreducible.''

It is instructive to apply Theorem \ref{Theo:PolyBanach} to $\Spc X=\ell_1(\mathbb{I})=(\R^N,\|\cdot\|_{\ell_1})$. The corresponding unit ball is the famous cross-polytope, which
has the following two equivalent representations: \begin{align}
B_{\ell_1}&={\rm Conv}\{\pm\M e_1, \dots, \pm \M e_N\} \label{Eq:Crossp1}\\
&=\big\{\M x\in \R^N: \sum_{n=1}^N b_n x_n\le 1 \mbox{ for all } b_1,\dots,b_N \in \{-1,+1\}\big\} \label{Eq:Crossp2},
\end{align}
where $\M e_n$ denotes the $n$th element of the canonical basis.
Equation \eqref{Eq:Crossp1} explicitly lists its $2N$ vertices (extreme points),
while \eqref{Eq:Crossp2} shows the $M=2^N$ supporting planes that specify its facets.
In this basic scenario, the S-form of the norm is simply $\|\M x\|_{\Spc X}=\|\M x\|_{\ell_1}$, while its A-form is \begin{align}
\|\M x\|_{\ell_1}=\|\M B^\Tr_{N}\M x\|_{\ell_\infty},
\label{Eq:l1infty}
\end{align}
where $\M B_{N}=[\M b_1\ \cdots \  \M b_{2^N}] \in \R^{N \times 2^N}$ is the matrix representation of all ``binary''
vectors $\M b_m$ in \eqref{Eq:Crossp2} with components in $\{-1,1\}$.
The theoretical relevance of \eqref{Eq:l1infty} is the possibility of representing an $\ell_1$-norm by an $\ell_\infty$-norm, while the reverse generally does not hold for $N>2$ (see Appendix B).

The dual of $\ell_1(\mathbb{I})$ is $\Spc X'=\ell_\infty(\mathbb{I})=(\R^N,\|\cdot\|_{\ell_\infty})$
whose unit ball is the $N$-dimensional hypercubic $B_{\ell_\infty}$, 
which has $2^N$ binary vertices $\M b_m$ and $2N$ facets with normal vectors $\pm\M e_n$. In this scenario, it is the S-form 
\begin{align}
\label{Eq:inftyl1}
\|\M x\|_{\ell_\infty}=\min_{\M z \in \R^{2^N}}\{\|\M z\|_{\ell_1}: \M x= \M B_N \M z\},
\end{align}
that is non-standard,
while the A-form reverts to the classical formula.

\section{Parametric Reconstruction Pipelines}
\label{Sec:CompuPipelines}
Given the constraint of an equivariant regularizer $g$, we now use
Theorems \ref{Theo:ConstructPoly} and \ref{Theo:UniversalApprox} to specify some 
polyhedral approximation(s) of the generic variational signal-reconstruction problem \eqref{Eq:VarRecontruct}.
We shall consider three architectures, with the third being the one that presently lends itself the best to training.
\subsection{Sparse Encoding in a Frame/Dictionary}
The first form in Theorem \ref{Theo:ConstructPoly}, together with the universality result \eqref{Eq:uniVN}, suggests the reformulation of our problem as
\begin{align}
\label{Eq:InvDico1}
S_{\rm I}=\{\tilde{\M s} \in \arg \min_{\M s \in \R^d} \left(\tfrac{1}{2}\|\M y - \M H \M s\|_2^2 + \lambda \|\M s\|_{{\rm S}, \M G}\}\right),
\end{align}
where the underling regularizer---the synthesis/atomic norm defined by \eqref{Eq:SynNorm0}---is para\-meterized
by the matrix $\M G \inR^{d \times N}$. The literal form \eqref{Eq:InvDico1} involves two nested optimizations: an outer one to minimize the overall cost, and an inner one to evaluate $\|\M s\|_{{\rm S}, \M G}$ for every candidate $\M s$, which, according to the definition, requires
the determination of an optimal codeword 
in the dictionary $\M G$. Fortunately, we can simplify the
task by setting $\M s=\M G\M z$ and recasting \eqref{Eq:InvDico1} in terms of the auxiliary codeword variable $\M z \in \R^N$. After substitution and mutualization of the minimization, this results in
\begin{align}
\label{Eq:InvDico2}
S_{\rm I}=\{\tilde{\M s}=\M G \tilde{\M z}: \  \tilde{\M z} \in \arg \min_{\M z \in \R^N} \tfrac{1}{2}\|\M y - \M H \M G \M z\|_2^2 + \lambda \|\M z\|_{\ell_1}\},
\end{align}
which has a strong feel of ``déja vu.''
Indeed, \eqref{Eq:InvDico2} is the standard ``synthesis form'' (a.k.a.\ sparse encoding) used in compressed sensing \cite{Elad2010b}, for which a multitude of efficient iterative solvers have been developed. The present contribution is the proof that \eqref{Eq:InvDico2} with a learnable dictionary $\M G$, as advocated in \cite{Rubinstein2010},
is universal for the class of equivariant convex regularizers.

We have already hinted to the property that the matrix $\M G=[\M g_1 \cdots \M g_N]$ 
defines a dictionary with atoms $\M g_n \in \R^d$. The admissibility condition ${\rm rank}(\M G)=d$ in Theorem \ref{Theo:ConstructPoly}
implies that the $\M g_n$ form a frame of $\R^d$  \cite{Christensen2003}. This means that
there exist two constants $A$ and $B$ such that
\begin{align}
\label{Eq:Frame}
\M s \in \R^d:\quad A \|\M s\|^2_2 \le \sum_{n=1}^N |\langle \M s, \M g_n\rangle|^2 \le B \|\M s\|_2^2.
\end{align}
The optimal frame bounds $A$ and $B$ are the minimum and maximum eigenvalues of the frame operator $\M s \mapsto \sum_{n=1}^N \M g_n\langle \M s, \M g_n \rangle =\M A \M s$, with $\M A=\M G\M G^\Tr \in \R^{d \times d}$ \cite{Christensen2003}. 

To ensure that all the elements of the dictionary are extreme points (irreducible scenario), we propose to impose the normalization constraint $\|\M g_n\|_2=1$.
Indeed, the strong convexity of $\M z \mapsto \|\M z\|_2^2$ implies that
$\|\lambda \M g_{n} + (1-\lambda) \M g_{n'}\|_2^2 <  \lambda \|\M g_n\|^2_2 + (1-\lambda) \|\M g_{n'}\|^2_2=1$,
for any $\lambda \in\, ]0,1[$ and $\M g_{n}\ne \M g_{n'}$. This means that all non-extreme points $\M g \in {\rm conv}\{\M \pm \M g_1,\dots,\pm \M g_N\}$ that are in the symmetric convex hull of the $\M g_n$ have an $\ell_2$-norm that is smaller than $1$. We can then insert a new element $\M g_{N+1}$ with $\|\M g_{N+1}\|_2=1$ without spoiling the extreme-point properties of the others.

While \eqref{Eq:InvDico2} is very attractive for inference because of the availability of efficient resolution methods when the dictionary is known, it is much harder to optimize over $\M G$ in order to learn the ``optimal" dictionary for a given class of signals, also in the simpler denoising scenario  ($\M H=\M I_d$) commonly used for training. \revise{We attribute this state of affairs to the fact that $\M G$ has a huge null space (of dimension $N-d$), which makes
the optimization difficult and costly. There is also the necessity to maintain the condition ${\rm rank}(\M G)=d$ to ensure that \eqref{Eq:InvDico2} with $\M G$ fixed is well-posed. This observation aligns with prior reports that convolutional sparse coding (CSC)  \cite{Papyan2017,ReyOtero2020}---the special case of \eqref{Eq:InvDico2} with $\M =\M I$ and $\M G$ shift-invariant---is not among the most effective dictionary-based denoising techniques \cite{Chen2014Learning,Carrera2017,Vedaldi2020}. }

\subsection{Universal Weighted $\ell_\infty$-Regularization}
The optimization problem associated with the second form in Theorem \ref{Theo:ConstructPoly} is
\begin{align}
\label{Eq:InvReg1}
S_{\rm II}=\{\tilde{\M s} \in \arg \min_{\M s \in \R^d} \tfrac{1}{2}\|\M y - \M H \M s\|_2^2 + \lambda \|\M F^\Tr\M s\|_{\ell_\infty}\}
\end{align}
where we are now using a distinct symbol for the regularization matrix $\M F=[\M f_1 \cdots \M f_{\widetilde{N}}] \inR^{d \times \tilde{N}}$  formed from the facet vectors $\M f_n$ of the unit ball $B_{\Spc X_{\rm S}}$. Here too, \eqref{Eq:InvReg1} is reminiscent of the regularization methods used in compressed sensing, with the important difference that the regularizer now involves an $\ell_\infty$-norm. Since the proximity operator $\|\cdot\|_{\ell_\infty}$ (adaptive soft-clip) is (almost) as convenient as that of
$\|\cdot\|_{\ell_1}$ (soft-threshold), it is possible to adapt all the traditional algorithms (e.g., FISTA or ADMM) for the resolution of \eqref{Eq:InvReg1} with $\M F^\Tr$ fixed. 

Observe that \eqref{Eq:InvReg1} is conceptually simpler and less constraining than \eqref{Eq:InvDico2} because there is no (higher-dimensional) auxiliary variable $\M z$ nor any issue with some hidden null space. 
In fact, the scheme has the capacity to encode all seminorms if one drops the rank constraint on $\M F$.


As for training, it can in principle be achieved based on a classic regression task where $\M F^\Tr$ is optimized for the minimum mean-square error denoising ($\M H=\M I_d$)
of a representative set of images.
The caveat is that the back-propagation of the gradient (training error)
to the regularization weights will be very slow because of the underlying $\ell_\infty$-norm acting as a binary switch. Finding a way around this limitation is a topic that deserves further investigation because
\eqref{Eq:InvReg1} is our most expressive convex architecture. 

\subsection{Weighted $\ell_1$-Regularization Revisited}
Due to the difficulties encountered with the training of the two aforementioned architectures, \revise{we  decided to revisit the regularization setup more typical of compressed sensing \cite{Candes2011compressed,Lin2014}}; namely,
 \begin{align}
\label{Eq:L1reg}
\tilde{S}_{\rm I}= \arg \min_{\M s \in \R^d} \tfrac{1}{2}\|{\bf y}-{\bf H s}\|_2^2 + \lambda \|\M L \M s\|_{\ell_1},
\end{align}
which involves an operator $\M L
 \in \R^{N \times d}$ with $N\ge d$ and a  $\ell_1$-penalty instead of the less standard $\ell_\infty$-norm in \eqref{Eq:InvReg1}. 
We now show that \eqref{Eq:L1reg} with trainable $\M L$ constitutes a valid alternative to \eqref{Eq:InvDico1} or \eqref{Eq:InvReg1} in the sense that it
 also induces a polyhedral regularization. \revise{For completeness, we also identify the underlying dual Banach geometry, which is analogous to the first part of Theorem \ref{Theo:ConstructPoly}, with the role of the $\ell_1$ and $\ell_\infty$-norm being interchanged.}

\revise{\begin{theorem}
\label{Theo:L1norm}
Let us consider the two complementary functionals
\begin{align}
\label{Eq:AnalNorm0}
 \M s \mapsto &\|\M s\|_{\M L,1}=\| \M L \M s\|_{\ell_1}
 \\
\M s \mapsto &\|\M s\|_{\infty, \M L^\Tr}=\inf\bigl\{\|\M t\|_{\ell_\infty} : \M s=\M L^\Tr \M t=\sum_{n=1}^N \M u_nt_n\bigr\}
\end{align}
parameterized by some rectangular matrix $\M L=[\M u_1 \cdots \M u_N]^\Tr  \in \R^{N \times d}$.
These are norms on $\R^d$ if and only if ${\rm rank}(\M L)=d$, in which case $\Spc X=(\R^d,\|\cdot\|_{\M L,1})$ and $\Spc X'=(\R^d,{\|\cdot\|_{\infty, \M L^\Tr}})$
form a dual 
pair of polyhedral Banach spaces.
\end{theorem}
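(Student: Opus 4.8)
The plan is to mirror the argument of Theorem \ref{Theo:PolyBanach}, simply interchanging the roles of the $\ell_1$- and $\ell_\infty$-norms: here $\|\M L\M s\|_{\ell_1}$ is an \emph{analysis} form built on $\ell_1$, whereas its companion is a \emph{synthesis} form built on the dual norm $\ell_\infty$, so the same analysis/synthesis duality should carry the two into a dual pair. (One could also reduce to Theorem \ref{Theo:PolyBanach} by writing $\|\M L\M s\|_{\ell_1}=\|(\M L^\Tr\M B_N)^\Tr\M s\|_{\ell_\infty}$ via \eqref{Eq:l1infty}, but I prefer a direct polarity argument that avoids the $2^N$ blow-up.)

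First I would dispose of the norm/rank equivalences. The map $\M s\mapsto\|\M L\M s\|_{\ell_1}$ is the composition of the linear map $\M L$ with a norm, hence automatically a seminorm; it is positive-definite exactly when $\|\M L\M s\|_{\ell_1}=0\Rightarrow \M s=\M 0$, i.e.\ when $\ker\M L=\{\M 0\}$, which for $\M L\in\R^{N\times d}$ means ${\rm rank}(\M L)=d$. For the infimum form, I note that the feasible set $\{\M t:\M L^\Tr\M t=\M s\}$ is nonempty for every $\M s$ iff $\M L^\Tr$ is surjective, i.e.\ ${\rm rank}(\M L)=d$; under this condition the infimum is a minimum, since $\|\cdot\|_{\ell_\infty}$ is coercive on the closed affine feasible set. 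Subadditivity and homogeneity follow from the standard synthesis-norm (infimal-convolution) argument, and positive-definiteness is immediate: if $\M L^\Tr\M t_k=\M s$ with $\|\M t_k\|_{\ell_\infty}\to 0$, then $\M s=\lim_k\M L^\Tr\M t_k=\M 0$. Thus both functionals are norms precisely when ${\rm rank}(\M L)=d$.

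The core is the duality, which I would prove at the level of unit balls via polarity. Write $B_{\Spc X}=\{\M s:\M L\M s\in B_{\ell_1}\}=\M L^{-1}(B_{\ell_1})$ for the unit ball of $\|\cdot\|_{\M L,1}$, and observe (using the attainment above) that the unit ball of $\|\cdot\|_{\infty,\M L^\Tr}$ is the linear image $\M L^\Tr(B_{\ell_\infty})$, since $\|\M s\|_{\infty,\M L^\Tr}\le 1$ iff $\M s=\M L^\Tr\M t$ for some $\M t\in B_{\ell_\infty}$. Applying the elementary polar calculus $(\M M C)^\circ=(\M M^\Tr)^{-1}(C^\circ)$ with $\M M=\M L^\Tr$, together with the mutual polarity $B_{\ell_\infty}^\circ=B_{\ell_1}$ of the hypercube and the cross-polytope in $\R^N$, I obtain
\begin{align*}
\bigl(\M L^\Tr(B_{\ell_\infty})\bigr)^\circ=\M L^{-1}\bigl(B_{\ell_\infty}^\circ\bigr)=\M L^{-1}(B_{\ell_1})=B_{\Spc X}.
\end{align*}
Since $\M L^\Tr(B_{\ell_\infty})$ is compact (continuous image of a compact set), convex, and symmetric, the bipolar theorem gives $\M L^\Tr(B_{\ell_\infty})=B_{\Spc X}^\circ$. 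Hence the unit ball of $\|\cdot\|_{\infty,\M L^\Tr}$ is exactly the polar $B_{\Spc X}^\circ$, which is the standard characterization that $\|\cdot\|_{\infty,\M L^\Tr}$ is the dual norm of $\|\cdot\|_{\M L,1}$, so $\Spc X$ and $\Spc X'$ form a dual pair.

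Finally, polyhedrality is read off the same two descriptions. Using the facial form \eqref{Eq:Crossp2} of $B_{\ell_1}$, the ball $B_{\Spc X}=\{\M s:\langle\M L^\Tr\M b_m,\M s\rangle\le 1,\ m=1,\dots,2^N\}$ is an intersection of finitely many half-spaces, hence a polyhedron; it is bounded once ${\rm rank}(\M L)=d$, so it is a polytope with finitely many extreme points. Dually, $B_{\Spc X'}=\M L^\Tr(B_{\ell_\infty})={\rm Conv}\{\M L^\Tr\M b_m\}$ is the convex hull of finitely many points, hence also a polytope. By Definition \ref{Def:PolyhedralSpace}, both $\Spc X$ and $\Spc X'$ are polyhedral. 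I expect the main obstacle to be the careful justification of the duality step: in particular, the attainment of the infimum (so that $\M L^\Tr(B_{\ell_\infty})$ really is the \emph{closed} unit ball) and the correct bookkeeping of the preimage/image polar identities, including the appeal to the bipolar theorem to turn the polar of an image back into the image itself.
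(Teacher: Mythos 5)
Your proof is correct, and it shares the paper's central object---the zonotope $Z=\M L^\Tr(B_{\ell_\infty})$ and its polarity with $\M L^{-1}(B_{\ell_1})$---but the two arguments are organized quite differently. Amusingly, the route you set aside in your opening parenthesis is precisely the paper's own argument for the analysis norm: the paper writes $\|\M L\M s\|_{\ell_1}=\|\M B_N^\Tr\M L\M s\|_{\ell_\infty}=\|\M G^\Tr\M s\|_{\ell_\infty}$ via \eqref{Eq:l1infty}, checks that left multiplication by the full-row-rank matrix $\M B_N^\Tr$ preserves rank, and then invokes Theorem \ref{Theo:ConstructPoly} to get both the norm/rank equivalence and polyhedrality in one stroke; polyhedrality of the dual then comes for free. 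You instead verify the norm axioms and the two directions of the rank condition by hand, and read polyhedrality directly off the two finite descriptions ($B_{\Spc X}$ as the intersection of the $2^N$ half-spaces $\langle\M L^\Tr\M b_m,\cdot\rangle\le 1$, and $B_{\Spc X'}={\rm Conv}\{\M L^\Tr\M b_m\}$), which is self-contained but re-derives what Theorem \ref{Theo:ConstructPoly} already packages. For the duality itself, the paper works with gauges and support functions: it identifies $\mu_Z$ with the synthesis functional straight from the definition of the Minkowski functional, uses Theorem \ref{Theo:GaugeNorms} to see that $\mu_Z$ is a norm ($Z$ is a bounded absorbing disk because the $\M u_n$ span $\R^d$), computes $\sup_{\M z\in Z}\langle \M z,\M x\rangle=\|\M L\M x\|_{\ell_1}$, and concludes $\mu_{Z^\circ}=\|\cdot\|_{\M L,1}$, hence $\mu_Z=\|\cdot\|_{\infty,\M L^\Tr}$ by duality. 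Your set-level polar calculus $(\M L^\Tr C)^\circ=\M L^{-1}(C^\circ)$ combined with the bipolar theorem is the same computation in different packaging; what it buys is an explicit treatment of a point the paper glosses over, namely that the infimum in the synthesis form is attained (coercivity of $\|\cdot\|_{\ell_\infty}$ on the closed affine feasible set), so that $\M L^\Tr(B_{\ell_\infty})$ is genuinely the closed unit ball and the bipolar step applies without closure caveats. Both routes hinge on the rank hypothesis in the same two places (injectivity of $\M L$ for definiteness of the analysis norm, surjectivity of $\M L^\Tr$ for finiteness of the synthesis norm), and both correctly deliver the ``only if'' direction.
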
}

\begin{proof} First, we show that $\|\M \cdot\|_{\M L,1}$ is a polyhedral norm. Using \eqref{Eq:l1infty}, we express it as $\|\M s\|_{\M L,1}=\|\M B^\Tr_N\M L \M s\|_{\ell_\infty}
=\|\M G^\Tr \M s\|_{\ell_\infty}$ with $\M G^\Tr=\M B^\Tr_N\M L \in \R^{M \times N}$, where
$\M B_N \in \R^{N \times M}$ is the binary matrix whose column vectors are the $M=2^N$ vertices of the $\ell_\infty$ unit ball (hypercube) in $N$ dimensions. 
The validity of \eqref{Eq:l1infty} for all $\M x\in \R^N$ also implies that
$\M B^\Tr_N$ has full row rank (i.e., ${\rm rank}(\M B^\Tr_N)=N$), as it would not yield a norm otherwise.
Since the left multiplication with such a matrix is rank-preserving \cite[p. 96]{Shilov1977}, one has that ${\rm rank}(\M B^\Tr_N\M L)={\rm rank}(\M L)\le d$.
We know from Theorem \ref{Theo:ConstructPoly} that  $\M s \mapsto \|\M G^\Tr \M s\|_{\ell_\infty}$
induces a polyhedral norm if and only if ${\rm rank}(\M G)=d$. Since ${\rm rank}(\M G^\Tr)={\rm rank}(\M G)$ is equal to ${\rm rank}(\M L)$ in the present setting, this is the desired result.

\revise{The dual norm is necessarily polyhedral as well. To reveal it, we consider the {\em zonotope} induced by the row vectors of $\M L$:
\begin{align}
\label{Eq:Zonotope}
Z := \sum_{n=1}^N[-\,\M u_n,\M u_n]
   = \Bigl\{\sum_{n=1}^N t_n \M u_n : \|\M t\|_{\ell_\infty}\le 1\Bigr\}.
\end{align}
This is a convex, center-symmetric subset of $\R^d$ that includes the origin as an interior point (because the $\M u_n$ span $\R^d$).
Since $Z$ is also bounded, its gauge $\mu_Z$ defines a valid norm on $\R^d$ (by Theorem \ref{Theo:GaugeNorms}). Specifically, we have that
\begin{align}
\mu_Z(\M y)
= \inf\{\lambda>0 : \M y\in \lambda Z\}
= \inf\bigl\{\|\M t\|_{\ell_\infty} : \M y=\textstyle\sum_{n=1}^N t_n\,\M u_n\bigr\},
\end{align}
with
$Z$ being the unit ball of $\mu_Z$ by construction. The dual norm of $\mu_Z$ is given by
\begin{align}
\forall \M x\in \R^d:\quad \mu_{Z^\circ}(\M x)= \sup_{\M z \inR^d: \mu_Z(\M z)\le 1}  \langle \M z, \M x\rangle = \sup_{\M z \in Z} \langle \M z, \M x\rangle
\end{align}
where $Z^\circ\subset \R^d$ is the polar of $Z$ (see Appendix A).
Now the key is that
\begin{align}
\sup_{\M z\in Z}\langle \M z,\M x\rangle
= \sup_{\|\M t\|_{\ell_\infty}\le 1}\sum_{n=1}^N t_n\langle \M u_n,\M x\rangle
= \sum_{n=1}^N |\langle \M u_n,\M x\rangle|
= \|\M L\M x\|_{\ell_1}
\end{align}
with the maximum being attained for $t_n={\rm sign}(\langle \M u_n,\M x\rangle)$.
Consequently,
we can identify $\mu_{Z^\circ}=\|\cdot\|_{\M L,1}$, which, by duality, implies that
$\mu_{Z}=\|\cdot\|_{\infty,\M L^\Tr}$. This then also yields $Z=B_{\Spc X'}$ and $Z^\circ=B_{\Spc X}$.}

\end{proof}

The matrix $\M G$ identified in the proof of Theorem \ref{Theo:L1norm} yields the explicit description of the underlying
unit ball, as formulated in the final part of Theorem \ref{Theo:ConstructPoly}; that is, the canonical polyhedral form \eqref{Eq:FacialUnitBall} with $\M F={\rm Ext}(\M G)=[\M f_1\  \cdots \ \M f_M]$. This is illustrated in Figure \ref{Fig:UnitBall}b with the facet vectors $\M f_m$ being overlaid.

The only limitation of \eqref{Eq:L1reg} with trainable $\M L$ is that this parameterization is not universal (\revise{see Proposition \ref{Prop:NonUniversal} in Appendix B}).
Yet, we can make an interesting connection with the synthesis formulation in 
\eqref{Eq:InvDico2}:
The rank condition in Theorem \ref{Theo:L1norm} guarantees the existence of a generalized inverse matrix $\widetilde{\M G}=[\tilde{\M g}_1 \cdots \tilde{\M g}_N] \in \R^{d \times N}$ (not necessarily unique) such that
\begin{align}
\label{Eq:GInverse}
\widetilde{\M G}\M L= \M I_{d},
\end{align}
with the canonical solution being the pseudo-inverse
$\widetilde{\M G}=\M L^\dagger=( \M L^\Tr \M L)^{-1} \M L^\Tr$.
By making the change of variable $\M z=\M L\M s$ in \eqref{Eq:L1reg} and by observing that $\widetilde{\M G}$ is a valid (bilateral) inverse of $\M L$ as long as its domain is restricted to ${\rm Ran}(\M L)=\{\M z=\M L \M s: \M s \in \R^d\}$, we get
\begin{align}
\label{Eq:L1reg2}
\tilde S_{\rm I}=\{\tilde{\M s}=\widetilde{\M G} \tilde{\M z}: \  \tilde{\M z} \in \arg \min_{\M z \in {\rm Ran}(\M L)} \tfrac{1}{2}\|\M y - \M H \widetilde{\M G} \M z\|_2^2 + \lambda \|\M z\|_{\ell_1}\},
\end{align}
which is almost the same as \eqref{Eq:InvDico1}, except for the restriction on the search space for $\M z$.
While the latter results in some loss of expressivity, 
it has the advantage of making the training of $\widetilde{\M G}$ much better posed by getting rid of the aforementioned ``nontrivial null space'' problem.

To demonstrate our point, we now concentrate on the scenario where the regularization operator can be factorized as $\M L=\V \Lambda \M T_{\V \theta}^\Tr$, where $\M \Lambda={\rm diag}(\lambda_1,\cdots,\lambda_N)$ is a diagonal matrix of adjustable weights with $\lambda_n>0$
and $\M T_{\V \theta} \in \R^{d \times N}$ is a tight frame parameterized by $\V \theta$. The defining property here is 
\revise{\begin{align}
\label{Eq:Tightframe}
\M T_{\V \theta} \M T_{\V \theta}^\Tr = \M I_d,
\end{align}
which makes the inversion process 
straightforward.} By using the same technique as in the derivation of
\eqref{Eq:L1reg2}, we reformulate \eqref{Eq:L1reg} as 
\begin{align}
\label{Eq:L1reg3p}
S_{\rm III}&=\{\tilde{\M s}=\M T_\V \theta \tilde{\M z}: \  \tilde{\M z} \in \arg \min_{\M z \in {\rm Ran}(\M T_\V \theta^\Tr)} \tfrac{1}{2}\|\M y - \M H \M T_{\V \theta} \M z\|_2^2 +  \|\V \Lambda \M z\|_{\ell_1}\}.
\end{align}
Next, we define the barrier functional
\begin{align}
\label{Eq:Barrier}
i_{\M T_{\V \theta}^\Tr}(\M z)=\begin{cases}
0,& \mbox{ if } \M z \in {\rm Ran}(\M T_\V \theta^\Tr)=\{\M z \in \R^N: \M T_{\V \theta}^\Tr \M T_{\V \theta} \M z=\M z\}\\
+\infty,& \mbox{otherwise}\end{cases}
\end{align}
which is lower semicontinuous and convex.
With the help of the latter, we recast \eqref{Eq:L1reg3p} as
\begin{align}
\label{Eq:L1reg3}
S_{\rm III}&=\{\tilde{\M s}=\M T_\V \theta \tilde{\M z}: \  \tilde{\M z} \in \arg \min_{\M z \in \R^N} \tfrac{1}{2}\|\M y - \M H \M T_\V \theta \M z\|_2^2 + i_{\M T_\V \theta^\Tr}(\M z)+  \|\V \Lambda \M z\|_{\ell_1}\}
\end{align}
which is easier to minimize 
because the constraint on the search space has been removed.

\subsection{Implementation of a Trainable Convex Regularizer}
A standard requirement in computational imaging is that the regularizer be shift-invariant. In the context of \eqref{Eq:L1reg}, this is achieved by employing an $N_{\rm chan}$-channel filterbank as the operator $\M L$. This takes $\M s \in \R^d$ as input and produces $N_{\rm chan}$ output channels, resulting in a feature vector $\M z=\M L \M s$ of size $N=N_{\rm chan} \times d$. In our implementation, we use a Parseval filterbank
$\M T^\Tr_\V \theta$ (tight frame) parametrized by an orthogonal matrix of size $N_{\rm chan} \times N_{\rm chan}$, as described in \cite{Unser2025a}.
We also found it beneficial to force all the active regularization filters to have zero mean. This constraint is imposed within our orthogonal parametrization by setting the first convolution mask to be proportional to $\M 1$, effectively creating a moving average filter.

To learn the regularizer $\M s \mapsto \|\V \Lambda\M T^\Tr_{\V \theta} \M s\|_{\ell_1}$ that  best represents a given class of signals/images, we follow the strategy of \cite{Goujon2023} and consider a basic denoising task with $\M H=\M I$ where the signal is corrupted by additive white Gaussian noise. To adjust the underlying model such as to achieve the best denoising on a representative set of images, we unroll the recurrent neural network in Algorithm 1 and/or use deep equilibrium \cite{Gilton2021} to learn the model parameters $(\V \Lambda,\V \theta)$.

For the purpose of experimentation, we considered the more general family of optimization problems
\begin{align}
\label{Eq:TrainNonLin}
S_{\rm IV}&=\arg \min_{\M s \in \R^d} \tfrac{1}{2}\|\M y - \M H \M s\|_2^2 +   \langle \M 1, \V \Phi(\M T^\Tr_{\V \theta}\M s)\rangle\\
&=\big\{\tilde{\M s}=\M T_\V \theta \tilde{\M z}: \  \tilde{\M z} \in \arg \min_{\M z \in \R^N} \tfrac{1}{2}\|\M y - \M H \M T_\V \theta \M z\|_2^2 + i_{\M T_\V \theta^\Tr}(\M z)+  \langle \M 1, \V \Phi(\M z)\rangle \big\}
\label{Eq:AnabySyn}
\end{align}
where $\V \Phi=(\phi_n): \R^N \to \R^N$ is a generic (pointwise) potential with convex component functions $\phi_n: \R \to \R_{\ge 0}$. Our implementation is based on the equivalent analysis-by-synthesis formulation \eqref{Eq:AnabySyn}, which decouples the effect of the filters from the nonlinearities induced by $\V \Phi$. The corresponding regularizer is
\begin{align}
\label{Eq:SepReg}
g(\M z)=\langle \M 1, \V \Phi(\M z)\rangle=\sum_{n=1}^N \phi_n(z_n),
\end{align}
which represents the most general separable form of  potential.
Equation \eqref{Eq:L1reg3} is recovered by setting $\phi_n(z)=\lambda_n|z|$.
Here, we also consider the option of learning the $\phi_n$ using the framework described in \cite{Unser2025ACHA},
which yields an architecture that offers greater expressivity than \eqref{Eq:L1reg3}.

Since \eqref{Eq:AnabySyn} is the sum of three functionals
whose gradient and/or individual proximal maps are easy to compute, it can be minimized
efficiently using proximal splitting methods \cite{Condat2023}.
For our experiments, we adopted the Douglas-Ratchford splitting \cite{BricenoArias2013,Raguet2018}, which resulted in the application-specific implementation 
summarized in Algorithm 1.

\begin{algorithm}
\begin{algorithmic}
\vspace*{0.5ex}
   \State \textit{Input:} The data vector $\M y \in \R^M$, the proximal operator of the potential $\V \Phi$, the tight frame $\M T_{\V \theta}$, and the step size $\tau$.
	\vspace{.5em}
   \State \textit{Initialization:} 
   \State \quad $\M z^{(0)}=\M T_{\V \theta}^\Tr \M H^\Tr \M y$\quad  (backprojection)
	\vspace{.5em}
   \State \textit{Main loop:} 
   \For{$n=0,1,2,\ldots$}
   \State $\M z^{(n+\tfrac{1}{2})} =  \;{\rm prox}_{\tau \V \Phi}\left(\M z^{(n)} - \tau \M T_{\theta}^\Tr \M H^\Tr (\M H \M T_{\theta} \M z^{(n)} - \M y)\right)$
   \State  $\M z^{(n+1)}= \M T_{\theta}^\Tr \M T_{\theta}\left(2\M z^{(n + \tfrac{1}{2})} - \M z^{(n)}\right) + \M z^{(n)} - \M z^{(n + \tfrac{1}{2})}$
  \EndFor
	\vspace{.5em}
   \State \textit{Output:} ${\tilde{ \M s}}=\M T_{\V \theta} \M z^{(n)}$ as solution of  \eqref{Eq:TrainNonLin}.
\end{algorithmic}
\caption{Douglas Ratchford splitting algorithm (DRS) for the resolution of inverse problem with \eqref{Eq:TrainNonLin} when the potential $\V \Phi$ is separable with known (or trainable) proximal operator ${\rm prox}_{\V \Phi}$.
\vspace*{0.5ex}}
\label{Algo:drs}\end{algorithm}

Besides the gradient of the data term, the key components for Algorithm~\ref{Algo:drs} are the proximal maps for: (i) the barrier functional $i_{\M T_{\V \theta}^\Tr}(\M z)$, and (ii) the separable regularizer $g$.
The proximal operator of the first corresponds to the projection operator 
$\M T_{\V \theta}^\Tr\M T_{\V \theta}: \R^{N} \to {\rm Ran}(\M T_{\V \theta}^\Tr)$, which is also used in \eqref{Eq:Barrier} to indicate the range constraint.
The proximal operator for $\|\M \Lambda \M z\|_{\ell_1}$ is the soft-threshold with parameter $\M \Lambda$. 
The proximal operator for the more general regularizer \eqref {Eq:SepReg} is separable as well, and given
by ${\rm prox}_{g}=({\rm prox}_{\phi_n})$ where each component is defined by \begin{align}
{\rm prox}_{\phi_n}(x)=\arg \min_{z \inR} \tfrac{1}{2}(x-z)^2 + \phi_n(z).
\end{align}
These functions must be monotone and firmly non-expansive \cite{Gribonval2020}, and can be represented and learned effectively using linear splines, as proposed in \cite{Unser2025ACHA}. 

\begin{algorithm}
\begin{algorithmic}
\vspace*{0.5ex}
   \State \textit{Input:} The data vector $\M y \in \R^M$, the channel-wise derivatives $\V \varphi=(\phi'_n)$ of the potential functions $\V \Phi=(\phi_n)$, the tight frame $\M T_{\V \theta}$, and the step size $\tau$.
	\vspace{.5em}
   \State \textit{Initialization:} 
   \State \quad $\M z^{(0)}=\M T_{\V \theta}^\Tr \M H^\Tr \M y$\quad  (backprojection)
	\vspace{.5em}
   \State \textit{Main loop:} 
   \For{$n=0,1,2,\ldots$}
   \State $\M z^{(n+\tfrac{1}{2})} = \M z^{(n)} - \tau \left(\M T_{\theta}^\Tr \M H^\Tr (\M H \M T_{\theta} \M z^{(n)} - \M y) + \V \varphi(\M z^{(n)})\right)$
   \State  $\M z^{(n+1)}= \M T_{\theta}^\Tr \M T_{\theta} \M z^{(n + \tfrac{1}{2})}$
  \EndFor
	\vspace{.5em}
   \State \textit{Output:} ${\tilde{ \M s}}=\M T_{\V \theta} \M z^{(n)}$ as solution of  \eqref{Eq:TrainNonLin}.
\end{algorithmic}
\caption{Accelerated proximal gradient descent algorithm (APGD) for the resolution of inverse problem \eqref{Eq:TrainNonLin} when the potential $\V \Phi$ is separable with trainable gradient $\V \phi=(\phi_n')$.\vspace*{0.5ex}}
\label{Algo:apgd}\end{algorithm}

For completeness, and to establish a connection with the best-performing convex FoE models \cite{Goujon2023}, we also implemented a variant of the reconstruction scheme (Algorithm~\ref{Algo:apgd}) for differentiable potentials $\V \Phi$. This variant makes use of the gradient $\nabla \V \Phi_{\M z} = (\phi_n')$, which can likewise be learned and parameterized using linear splines. \revise{The detailed description and justification of this scheme can be found in \cite{Unser2025ACHA}}.

\revise{Since the functionals to minimize are convex, Algorithms \ref{Algo:drs} and \ref{Algo:apgd} both converge to a global minimum whenever $\tau < 2/\rho$ where $\rho$ is the operator norm of $\M T_\theta^\top \mathbf H^\top \mathbf H \mathbf T_\theta$ \cite{Bauschke2011convex}. As long as this condition is fulfilled, $\tau$ has very little incidence on performance.}

\subsection{Experimental results}

For validation purposes, we applied our framework to the denoising of natural images and to the reconstruction of magnetic resonance data.

\subsubsection{Training}
The training was performed on a basic denoising task and is common to \revise{all scenarios, including MRI reconstruction}.
The dataset consists of 238400 patches of size $(40 \times 40)$ taken from the BSD500 image dataset \cite{Arbelaez2011}.
All noise-free images $\M s \in \R^d$ are normalized to take values in $[0, 1]$. They were then corrupted with additive Gaussian noise of standard deviation $\sigma$ to yield the data $\M y \in \R^d$. 

Our convex denoisers come in three variants: The first, denoted by $\M f_{\V \theta,\V \Lambda}$, is specified by \eqref{Eq:L1reg3} with $\M H=\M I$.
The second and third are both specified by \eqref{Eq:AnabySyn} with $\M H=\M I$, but differ
in the training strategy applied to $\V \Phi$; namely, proximal (Algorithm 1) versus gradient-based (Algorithm 2). Since $\V \Phi$ is trainable through its proximity operator, the second architecture is {\em a priori} more expressive than the first, which is restricted to polyhedral regularization, in accordance with the theory.
All denoisers use Parseval filterbanks of size $W \times W$ as regularization operators, with a total number of channel $N_{\rm chan}=W^2$.
\revise{These filters are specified by an orthogonal matrix $\M U \in  \R^{N_{\rm chan}\times N_{\rm chan}}$, which is itself parameterized using the Bj{\"o}rck algorithm \cite{Bjorck1971}.}
They are trained in PyTorch for the regression task $\M f_{\V \theta,\V \Lambda}(\M y)\approx \M s$. \revise{Since the Bj{\"o}rck parametrization is exact,  the tight-frame relation \eqref{Eq:Tightframe} is guaranteed to hold at every stage of the optimization process.}
During training, the algorithm was run until the relative difference of the iterates fell below 1e-4. Gradients were estimated via the deep equilibrium framework using the Broyden algorithm with 25 iterates \cite{Broyden1965}. 

\revise{Since $N_{\rm chan}=W^2$ and the complexity of each filter is $O(W^2)$,  the computational cost for a full training grows quadratically with the number of channels. The same holds true for inference which then reduces to a ``standard'' iterative resolution of a convex problem.}

\subsubsection{Denoising results}
\label{Sec:Denoising}
\begin{table}[tb]
\centering
\label{table:denoising_perf}
\begin{tabular}{lccccc}
\hline
Filter Size &2$\times$1& 3$\times$3 & 5$\times$5 & 7$\times$7 & 9$\times$9 \\
\hline
Total Variation & 27.48 \\
Weighted $\ell_1$ (polyhedral norm)&& {\bf 27.86} & {\bf 27.98} & 28.01 & 28.03 \\
Learned Prox && 27.80 & {\bf 27.98} & {\bf 28.02} & {\bf 28.04} \\
Learned Gradient && 27.79 & 27.95 & 27.99 & 28.01 \\
\hline
\end{tabular}
\caption{PSNR (in dB) on BSD68 with noise level $\sigma=25/255$. \label{Tab:Denoise}}
\vspace{.2cm}
\end{table}

The denoising performance on the BSD68 test set is reported in Table~\ref{Tab:Denoise} for $\sigma =  25/255$.
All trained denoisers outperform total variation (TV) denoising, which is included as a baseline.
We observe that their performance improves with the number of regularization channels, up to a point where it saturates.
Remarkably, the weighted $\ell_1$ scheme performs on par with “Learned Prox,” despite the latter having significantly more degrees of freedom and the capacity to learn arbitrary pointwise proximal nonlinearities. Even more striking is the fact that “Learned Prox” converged to nonlinearities that closely resemble soft-threshold functions, as shown in Figure~\ref{Fig:Prox}.
Another general trend is that the learned weight $\lambda_1$ for the first channel (corresponding to the moving average) was consistently close to zero, effectively preserving the signal’s lowpass content---see identity-like graph on upper left in Figure~\ref{Fig:Prox}.

We attribute the slightly lower performance of “Learned Gradient” to its difficulty to learn a polyhedral norm which originates from a potential that is non-differentiable at the origin---an observation that provides additional supports for our universality result.

\begin{figure}[tpb]
    \centering
  \includegraphics[scale=0.5]{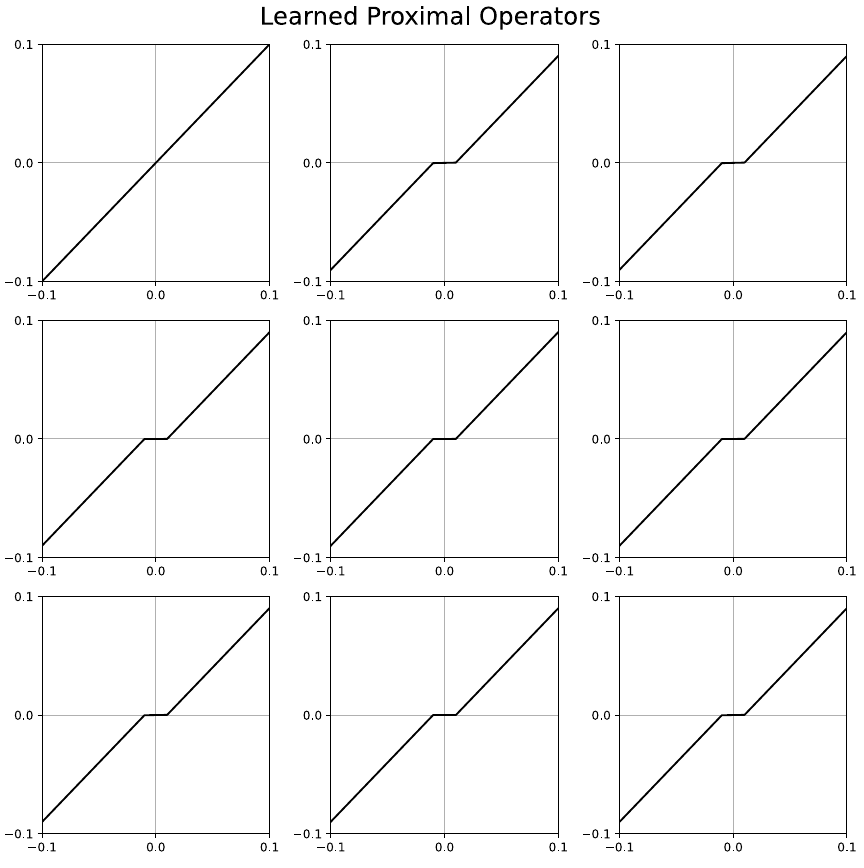}
    \caption{Learned proximal nonlinearities in the $3 \times 3=9$ channel scenario. \label{Fig:Prox}
}
\end{figure}
\revise{\subsubsection{Learning Regularizers through Denoising}
In the traditional variational or Bayesian formulation of inverse problems, the regularization functional 
$g: \R^d \to \R$ in \eqref{Eq:VarRecontruct} encodes the prior information on the signal and is, in principle, independent of the specific reconstruction task. When we train the system for denoising, as in Section~\ref{Sec:Denoising}, we effectively fix the regularizer to $g=g_{\V \theta}$, which leaves one remaining degree of freedom: the regularization parameter $\lambda$.}

\revise{To assess how well our weighted $\ell_1$ regularization complies with this classical paradigm, we conducted additional denoising experiments across a range of noise levels and three distinct training strategies. In the first scenario (retraining), the regularizer is retrained separately for each noise level. In the second scenario (no retraining), the system is trained once at a fixed noise level $\sigma=15/255$ and then applied to all other noise levels, with the only adaptation being the tuning of $\lambda$.
In the third scenario (joint training), a single model is trained on the entire range $\sigma \in [0, 0.1]$ and then used for inference in the same way as in Scenario 2. In both Scenarios 2 and 3, the value of $\lambda$ is adjusted automatically through a lightweight neural network that predicts the optimal parameter $\lambda=\lambda(\sigma)$.}

\revise{The performance of the three approaches at noise levels $\{5/255, 10/255,$ $15/255,$ $20/255, 25/255\}$
is summarized in Table~\ref{tab:robust}. The results are nearly identical across all strategies, indicating that our method generalizes well to varying noise conditions. This strongly supports the adequacy of the variational formulation \eqref{Eq:VarRecontruct} with a fixed, universal regularizer 
$g=g_\theta$ whose strength is modulated by a single hyperparameter $\lambda$.}
\revise{\begin{table}[]
\begin{tabular}{llllll}
\hline\hline
 & $\sigma=\frac{5}{255}$ & $\sigma=\frac{10}{255}$ & $\sigma=\frac{15}{255}$ & $\sigma=\frac{20}{255}$ & $\sigma=\frac{25}{255}$ \\[0.5ex]
\hline
1. With retraining & 36.91 & 32.67 & \bf{30.48} & 29.06 & 28.03 \\
2. Without retraining & 36.89 & 32.63 & \bf{30.48} & 29.06 & 28.02 \\
3. Joint training & 36.91 & 32.67 & \bf{30.48} & 29.04 & 28.02 \\
\hline\hline
\end{tabular}
\caption{\label{tab:robust}
Comparison of denoising performance (PSNR in dB) across noise levels and training strategies.
}
\end{table}
}

\subsubsection{MRI Reconstruction}
For the MRI reconstruction experiments, we used Parseval filters of size $7 \times 7$. \revise{We follow the same protocol as in \cite{Unser2025a}
where the image is reconstructed based on the minimization of \eqref{Eq:VarRecontruct} with a fixed pretrained regularization functional  $g=g_{\V \theta}$ and a single hyperparameter $\lambda \in \R_{\ge0}$ that is optimized for best performance. 
For each regularization scenario, the training of $g_{\V \theta}$ is achieved by denoising according to the procedure described in Section 
\ref{Sec:Denoising}.}
We investigated three Fourier-domain sampling schemes: (a) random; (b) radial; and (c) (semi-)Cartesian with non-uniform sampling along the horizontal direction—each corresponding to a specific system matrix $\M H$. The reconstruction algorithm was run until the relative difference between successive iterates dropped below $10^{-5}$.

The reconstruction performance across the different acquisition protocols and regularization strategies is summarized in Table~\ref{tab:mr_reconstruction} for two representative scans (brain and bust). The overall trend mirrors that observed in the denoising experiments: notably, that our \revise{pretrained} polyhedral regularizer performs on par with the two alternative schemes.

\begin{table}[!tb]
\begin{tabular}{l|cccccc}
\hline\hline \text { Subsampling mask } & \multicolumn{2}{c}{\text { Random }} & \multicolumn{2}{c}{\text { Radial }} & \multicolumn{2}{c}{\text { Cartesian }} \\
\text { Image type } & \text { Brain } & \text { Bust } & \text { Brain } & \text { Bust } & \text { Brain } & \text { Bust } \\
\hline \text { Zero-filling } & 23.18 & 24.90 & 22.57 & 23.51 & 20.85 & 22.34\\
\text { TV } & 26.49 & 28.40 & 25.71 & 27.84 & 22.85 & 25.57\\
\text{ Weighted} $\ell_1$ & \textbf{27.48} & \textbf{29.07} & 26.69 & 28.53 & 23.13 & 25.83\\
\text{ Learned Prox} & \textbf{27.48} & \textbf{29.07} & \textbf{26.70} & \textbf{28.56} & \textbf{23.14} & \textbf{25.84}\\
\text{ Learned Grad} & 27.47 & 29.05 & \textbf{26.70} & 28.54 & 23.12 & 25.81\\
\hline\hline
\end{tabular}
\caption{PSNR values (in dB) for MRI reconstruction.}
\label{tab:mr_reconstruction}
\end{table}

The results for the Brain dataset with radial Fourier sampling are shown in Figures~\ref{Fig_MRI}. In the lower panel, the zero-filled reconstruction exhibits noticeable overlaid textured noise. This reconstruction noise is substantially reduced using total variation (TV)—a method commonly employed for this purpose—and even further suppressed by the three learned regularizers, all of which produce visually similar outputs.
\begin{figure}[tpb]
    \centering
  \includegraphics[scale=0.6]{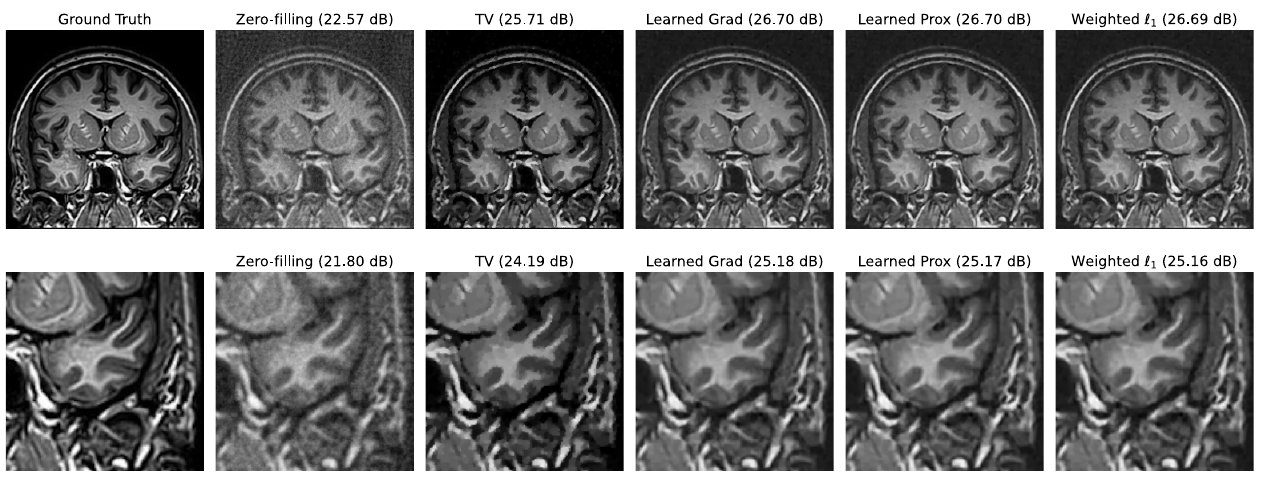}
    \caption{
 Comparison of ground truth, zero-fill/backprojection $\M H^{\Tr} \M y$, and four variational reconstructions of the brain image from radially sampled Fourier data.
 Lower panel: zoom of a region of interest. The SNR is evaluated with respect to the groundtruth.
    \label{Fig_MRI}
}
\end{figure}
\subsection{Discussion}
Our experiments demonstrate that our weighted-$\ell_1$ scheme matches the performance of state-of-the-art ML frameworks for convex image reconstruction, while requiring significantly fewer parameters and reduced computational resources.

That said, there remains room for improvement, as the architecture used in our experiments only partially exploits the degrees of freedom offered by the polyhedral framework. This limitation arises because of: (i) our use of Parseval filterbanks, rather than general convolutional operators, to simplify training; and (ii) the adoption of an analysis-by-synthesis formulation, which serves only as a proxy for the universal dictionary-based architecture established by our theory.

Further research is needed to develop an architecture that is both machine-learning friendly and universal in the sense described in this work. Such an architecture would help delineate the ultimate capabilities of convex regularization.

\pagebreak
\section*{Appendix A: (Semi)Norms and Banach Duality}
\addcontentsline{toc}{section}{Appendix A: (Semi)norms and Banach Duality}
\begin{definition}[Norms and seminorms]
\label{Def:norm1}
A seminorm $p$ on a vector space $\Spc X$ is a real-valued functional $\Spc X \to \R_{\ge 0}$ 
that, for all $x \in \Spc X$, satisfies the following properties: \begin{itemize}
\item Homogeneity: $|\alpha| \,p(x)=p(\alpha \,x)$ for all $\alpha \in \R$ (or $\C$).
\item Subadditivity (or triangle inequality): $
p(x+y)\le p(x)+p(y)$ for all $x,y \in \Spc X$.
\end{itemize}
If, in addition, $p(x)=0 \Leftrightarrow x=0$ (positive-definiteness), then $p$ is called norm and is
 commonly denoted by $\|\cdot\|=p(\cdot)$.
 
A {\em normed space} is a vector space equipped with a norm---the generic form being $(\Spc X,\|\cdot\|_\Spc X)$.
The latter is a {\em Banach space} whenever it is complete; that is, when every Cauchy sequence in $(\Spc X,\|\cdot\|_{\Spc X})$ has a limit in $\Spc X$.
\end{definition}

We denote the (closed) unit ball of a Banach space $\Spc X=(\Spc X,\|\cdot\|_{\Spc X})$ by $B_\Spc X$ with
\begin{align}
B_{\Spc X}=\{x \in \Spc X: \|x\|_{\Spc X}\le 1\}.
\end{align}
The continuous dual of $\Spc X$ is the vector space $\Spc X'$ of continuous linear functionals on $\Spc X$ with generic element $f: x \mapsto \langle f, x\rangle \in \R$. A fundamental property is that
$\Spc X'=(\Spc X',\|\cdot\|_{\Spc X'})$ 
 equipped with the dual norm
\begin{align}
\label{Eq:DualNorm}
\forall f \in \Spc X': \quad \|f\|_{\Spc X'}=\sup_{x \in B_{\Spc X}}\langle f, x\rangle.
\end{align}
is a Banach space as well \cite{Megginson2012introduction}.

The unit ball $B_{\Spc X}$ is a convex, center-symmetric subset
of $\Spc X$ (i.e., a disk) whose rescaled versions specify all Banach neighborhoods of the origin
$0 \in \Spc X$.
Similar to \eqref{Eq:DualNorm} which makes an explicit connection between $B_{\Spc X}$ and the dual norm, 
it possible to relate $B_{\Spc X}$ to the initial norm $\|\cdot\|_{\Spc X}=\mu_{B_{\Spc X}}(\cdot)$ with the help of the Minkowski functional (see Theorem \ref{Theo:GaugeNorms}).

In finite dimensions, one defines the polar of any subset $B\subset \R^d$  as
\begin{align}
\label{Eq:Polar}
B^\circ = \{\M y \in \R^d: \langle \M y, \M x \rangle \le 1, \M x \in B\}= \{\M y \in \R^d: \sup_{\M x \in B}\langle \M y, \M x \rangle \le 1\},
\end{align}
with the latter necessarily being a closed convex subset of $\R^d$ that contains the origin (see  \cite{Brondsted2012Polytopes}). If $B=B_{\Spc X}$ is the unit ball of the Banach space
$\Spc X=(\R^d,\|\cdot\|_{\Spc X})$, then its polar $B^\circ_{\Spc X}$ coincides with the unit ball $B_{\Spc X'}$ of the dual space, in conformity with \eqref{Eq:DualNorm}. Accordingly, we have
that $\Spc X'=\big(\R^d,\mu_{B^\circ_{\Spc X}}(\cdot)\big)=(\R^d,\|\cdot\|_{\Spc X'})$, as well as $(B^{\circ}_{\Spc X})^\circ=B_{\Spc X}$ because $\Spc X$ is reflexive.
\revise{\section*{Appendix B: Weighted $\ell_1$-norms are not universal}
\addcontentsline{toc}{section}{Appendix B: Weighted $\ell_1$ norms are not universal}
The following result implies
that the family of weighted $\ell_1$-norms is not universal in the sense of Theorem \ref{Theo:UniversalApprox}.
\begin{proposition}
\label{Prop:NonUniversal}
For $d>2$, there does not exist a rectangular matrix $\M L$ such that $\|\M x\|_{\ell_\infty}=\|\M L \M x\|_{\ell_1}$ for all $\M x \in \R^d$. \end{proposition}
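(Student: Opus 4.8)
The plan is to argue by duality and then invoke the face structure of zonotopes. Suppose, for contradiction, that some $\M L\in\R^{N\times d}$ satisfies $\|\M x\|_{\ell_\infty}=\|\M L\M x\|_{\ell_1}$ for all $\M x\in\R^d$. If ${\rm rank}(\M L)<d$, then $\M x\mapsto\|\M L\M x\|_{\ell_1}$ vanishes on the nontrivial kernel of $\M L$ and is merely a seminorm, so it cannot coincide with the positive-definite $\ell_\infty$-norm; hence I may assume ${\rm rank}(\M L)=d$. Writing $\M L=[\M u_1\ \cdots\ \M u_N]^\Tr$, Theorem \ref{Theo:L1norm} identifies the dual norm of $\M x\mapsto\|\M L\M x\|_{\ell_1}$ as the gauge $\mu_Z$ of the zonotope $Z=\sum_{n=1}^N[-\M u_n,\M u_n]$, so that $Z$ is precisely the unit ball of the dual space $\Spc X'$.

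Next I would take duals of the assumed identity. The dual norm of $\|\cdot\|_{\ell_\infty}$ is $\|\cdot\|_{\ell_1}$, whose unit ball is the cross-polytope $B_{\ell_1}={\rm Conv}\{\pm\M e_1,\dots,\pm\M e_d\}$. Since equal norms on $\R^d$ have equal dual unit balls (Appendix A, reflexivity), the assumption forces $B_{\ell_1}=Z$; that is, the $d$-dimensional cross-polytope would have to be a zonotope.

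The contradiction then comes from the face structure. A classical fact \cite{Ziegler2012Polytopes} is that every face of a zonotope is again a zonotope and hence centrally symmetric; in particular, all two-dimensional faces of $Z$ are centrally symmetric polygons. On the other hand, for $d\ge 3$ the cross-polytope $B_{\ell_1}$ possesses two-dimensional faces of the form ${\rm Conv}\{\epsilon_i\M e_i,\epsilon_j\M e_j,\epsilon_k\M e_k\}$ with three distinct indices $i,j,k$ and signs $\epsilon_i,\epsilon_j,\epsilon_k\in\{-1,+1\}$; each such face is a triangle and is therefore not centrally symmetric. This contradicts $B_{\ell_1}=Z$ and completes the argument. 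The hypothesis $d>2$ is essential here: for $d=2$ the cross-polytope is a square, which is the zonotope $[-\tfrac12(\M e_1+\M e_2),\tfrac12(\M e_1+\M e_2)]+[-\tfrac12(\M e_1-\M e_2),\tfrac12(\M e_1-\M e_2)]$, mirroring the explicit identity $\max(|x_1|,|x_2|)=\tfrac12|x_1+x_2|+\tfrac12|x_1-x_2|$.

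The step requiring the most care is the passage from the norm identity to the geometric statement $B_{\ell_1}=Z$, which hinges on the zonotope description of the dual unit ball established in the proof of Theorem \ref{Theo:L1norm}. The remaining ingredient---that faces of zonotopes are centrally symmetric while $B_{\ell_1}$ has triangular $2$-faces for $d\ge3$---is classical but is the crux producing the dimension threshold; pinning down the combinatorial description of the $2$-faces of $B_{\ell_1}$ is the only genuinely computational point, and it is elementary.
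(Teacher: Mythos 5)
Your proof is correct and follows essentially the same route as the paper's: dualize via Theorem \ref{Theo:L1norm} to identify the dual unit ball as a zonotope, note it must equal the cross-polytope $B_{\ell_1}$, and derive a contradiction from the fact that zonotopes have centrally symmetric $2$-faces while the cross-polytope's $2$-faces are triangles for $d\ge 3$. Your explicit handling of the rank-deficient case and the $d=2$ sharpness remark are small additions the paper leaves implicit, but the core argument is identical.
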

\begin{proof}
Let us assume that there exists a matrix $\M L=[\M u_1 \cdots \M u_N]^\Tr$ such that $\|\M x\|_{\ell_\infty}=\|\M L \M x\|_{\ell_1}$. Then, by Theorem \ref{Theo:L1norm}, the
dual unit ball $B_{\ell_1}=B^\circ_{\ell_\infty}$ must be a zonotope $Z$ whose generic form is given by \eqref{Eq:Zonotope} \cite{McMullen1971,Shephard1974}. 
On the one hand, we know that $B_{\ell_1}$ is the $d$-dimensional cross-polytope, 
which is simplicial, with all its 2-faces being triangular. On the other hand, we can invoke
a classical result \cite[Thm. 3.5.2 p. 191]{Schneider2013}, which states that a polytope is a zonotope if and only if all its 2-faces are centrally symmetric. 
In fact, there is an even stronger result by McMullen stating that if the $k$-faces for $2\le k \le d-2$ of a $d$-polytope $P$ are centrally symmetric, then all the faces of $P$ of every dimension are centrally symmetric \cite{McMullen1970}. Ultimately this means that a polytope is a zonotope if and only if all its faces are centrally symmetric.
This leads to a contradiction for $d\ge3$ since a non-degenerate triangle cannot be centrally symmetric.
\end{proof}
}

\subsection*{Acknowlegdments}
The research was supported by the European Commission under Grant ERC-2020-AdG 
FunLearn-101020573.

\bibliographystyle{siam}


%
%

\bibliography{/Users/unser/MyDrive/Bibliography/Bibtex_files/Unser.bib}


\end{document}